\documentclass[12pt]{article}
\usepackage{ulem}
\usepackage{graphicx} 
\usepackage[top=2cm, bottom=2cm, left=1.5cm, right=1.5cm]{geometry}

\usepackage{subcaption}
\usepackage{booktabs}
\usepackage{algorithm}
\usepackage{algpseudocode}
\usepackage{algpseudocode}
\usepackage[percent]{overpic}
\usepackage{amsmath, amssymb, amsfonts, amstext, amsthm, textcomp, enumerate, xcolor}
\usepackage[mathscr]{euscript}
\usepackage{microtype}
\usepackage{centernot}

\newtheorem{thm}{Theorem}[section]
\newtheorem{lem}[thm]{Lemma}
\newtheorem{rem}[thm]{Remark}

\usepackage{color}

\begin{document}
\begin{center}
\vspace{3cm}
 {\bf \large A Theoretical and Experimental Study of a Novel Adaptive Learning Algorithm}\\
\vspace{1cm}

{\bf Sakshi Kumari$^{\#}$, Shyam Kumar M$^{*}$, Sushmitha P$^{\#}$}\\
\vspace{.5cm}
$^{\#}$Department of Mathematics\\
Indian Institute of Technology Patna\\
Bihta 801 106\\
India\\

\vspace{.5cm}
$^{*}$Department of Mechanical Engineering\\
Indian Institute of Technology Kharagpur\\
Kharagpur  \\
India 
\end{center}
\vspace{1cm}
\begin{center}
    {\bf ABSTRACT}
\end{center}
A crucial component of machine learning algorithms is minimizing loss functions with less computational cost and less oscillations. While adaptive learning rate-based optimizers have been widely used for real-world tasks, they do not guarantee convergence, which is why AMSGrad was later introduced to investigate the non-convergence behaviour of Adam. In this paper, popular adaptive optimization methods like Adam and AMSGrad are critically reviewed with an emphasis on their fundamental design concepts. To address limitations of the above mentioned optimizers, a new optimizer variant, C-Adam, is proposed based on the line of sight approach. A theoretical proof for convergence is also provided and the optimizer is validated through a number of real-life based numerical experiments.
\vspace{2cm}
 
{\bf AMS Subject Classification (2020):} 68T07, 68T20 
\vspace{2cm}

{\bf Keywords:} Machine Learning, Adaptive Moment Estimation, Adam, AMSGrad, Optimizer

\newpage

\section{Introduction}\label{introduction}
The minimization of objective functions subject to certain constraints is a central problem in optimization theory and lies at the core of many ongoing developments in Artificial Intelligence \cite{avramelou2025sign, wu2026convolutional, das2026meaning}. For example, solving systems of non-linear partial differential equations using the highly non-convex function leads to several applications in the field of science and engineering \cite{yu2022gradient}.

An optimizer is an algorithm used to update the internal parameters of a model, such as weights and biases, and provide a point, called optimial point, that minimizes the loss function, which in turn works as an objective function in machine learning. There are various optimizers in use, of which the most relevant ones are discussed here. Stochastic Gradient Descent (SGD) has emerged as one of the most widely used optimizers, where gradients are approximated using randomly selected subset of data. Given a convex function $f$ with unbiased stochastic gradients, bounded variance, and step sizes that satisfy the Robbins-Monro conditions, namely $\sum_{t} \alpha_t = \infty \quad \text{and} \quad \sum_{t} \alpha_t^2 < \infty$, the SGD converges almost surely (that is with probability 1) to the optimal point \cite{NguyenNguyenRichtarikScheinbergTakacVanDijk2019, liu2023aiming}.

In practice, SGD tends to search for an optimal point near local minima as weight updates are purely based on the current gradient of the loss function. This led to the introduction of the optimizer SGD with momentum to counter the SGD drawback by adding a velocity term that stores information about past gradients. Another limitation of SGD is that the learning rate is fixed, i.e., there is no control over individual parameter learning. AdaGrad \cite{DuchiHazanSinger2011} solves this limitation by adjusting the learning rate for each parameter based on the history of past gradients observed. AdaGrad tends to make adaptive learning rate strictly decreasing in nature by scaling the learning rate by a factor of $(\sum_{t} g_t^2)^{-1/2}$, which raises yet another concern of learning rate decay in the case of dense gradients. To overcome this challenge of AdaGrad, RMSProp \cite{TielemanHinton2012} and AdaDelta \cite{zeiler2012} were introduced. In order to address these issues simultaneously, Adam \cite{KingmaBa2014} was later introduced as a mix of RMSProp and SGD with momentum. 
This made Adam effective in adjusting the adaptive learning rate for individual parameters and in navigating complex, noisy, or non-stationary landscapes without being stuck at the local minima.

Despite being one of the most popular optimizers \cite{sreckovic2025your, kalra2021applying}, Adam, with its own demerits, has shown a non-convergence nature due to the non-monotonic characteristics of the second order moment \cite{ReddiKaleKumar2019}. To ensure this, AMSGrad was introduced \cite{ReddiKaleKumar2019} retaining the maximum of second order moment observed and by scaling the learning rate using the past seen second moments during parameter updates. Inspite of resolving certain theoretical issues of Adam, AMSGrad was shown to lead to poor generalization in case of extreme learning rate \cite{luo2019adaptive}. AdaBound was then introduced, in which dynamic bounds on adaptive learning rates were applied to avoid extreme updates \cite{luo2019adaptive}. In AdaBound, a combination of Adam and SGD was used leading to early convergence and better generalization. The authors in \cite{zaheer2018adaptive} developed a method to address convergence and generalization issues in previous optimizers like RMSProp, Adam, AdaDelta. They then went on to develop a new adaptive optimization algorithm, YOGI, that controls the increase in effective learning rate for better performance, maintaining the theoretical guarantee of convergence. AMSGrad and its variants are widely used when stable convergence and reliable optimization are required in deep neural networks, and have shown effective results in several problem domains \cite{svelebamultilayered,mohammed2019new}. However, by forcing the second moment to be non-decreasing, AMSGrad might tend to keep the learning rate too small for too long, hindering the ability to escape saddle points or navigate complex, non-convex surfaces. This leads to a fundamental problem in the fix itself, which renders it too cautious.

In search of an optimizer that guarantees the non-decreasing character of the second moment while limiting the decay in the adaptive learning rate caused by noisy gradients, a customized version of Adam, namely C-Adam, is proposed, that relaxes the aggressive maximization of the previously observed second moments in AMSGrad while ensuring the non-decreasing nature of the same. The now proposed optimizer tries to incorporate the theoretical correctness of AMSGrad with enhanced practical applications. To deliver this, we shift from point-wise update to ``line of sight" update, ensuring that the optimizer accounts for the transitionary gradients between discrete steps. The structure of the paper is as follows. In Section \ref{prelim}, all the preliminaries and notations are given. In Section \ref{math}, the algorithm of C-Adam is discussed and the regret bound for the proposed optimizer is derived, mathematically ensuring the performance is either better or comparable with the existing optimizers. Examples where the suggested optimizer arrives at a final optimal point with nearly zero regret while both Adam and AMSGrad are unable to learn the solution surface, are also provided. The performance of C-Adam, in comparison with Adam and AMSGrad, on various real-life based examples are demonstrated in Section \ref{exp}. Concluding remarks and the possible drawbacks of C-Adam are discussed in Section \ref{conclusion}.

\section{Preliminaries}\label{prelim}
Throughout the paper, matrix $A$ denotes a positive definite $d\times d$ matrix, unless otherwise mentioned. Similarly for $x\in \mathbb{R}^d$ denotes a $d$-dimensional vector. $\sqrt{x}$ and $x^2$ are used to denote the vectors $(\sqrt{x_i})_i$ and $(x_i^2)_i$, the entrywise square root and square respectively of the vector $x$. Given any two vectors $x,y \in \mathbb{R}^d$, max$\{x,y\}$ represents the vector (max$\{x_i,y_i\})_i$. In a similar fashion, $x\leq y$ or $x=y$ represent the entrywise inequality and equality of the vectors.

$A^{\frac{1}{2}}$ is used to represent $\sqrt{A}$, which is the matrix all of whose entries are the square roots of the corresponding entries of $A$. $\Vert A_i\Vert_2$ is used to denote the $l_2$-norm of the $i^{th}$ row of $A$. The projection operation $\Pi_{\Omega, A}(y)$, where $\Omega$ is a set with bounded diameter $\Omega_\infty$, is defined as $\Pi_{\Omega, A}(y)=\text{arg min}_{x\in \Omega}\Vert A^{\frac{1}{2}}(x-y)\Vert$, for $y\in \mathbb{R}^d$.

In the online optimization setup, at each time step $t$, the algorithm picks a vector (of parameters of the model to be learned) $x_t \in \Omega$, where $\Omega \subset \mathbb{R}^d$ is the feasible set of points. The algorithm then uses a loss function $f_t$ and calculates $f_t(x_t)$. Then the regret of the algorithm after $T$ rounds of this process is given by $R_T = \sum_{i=1}^T f_t(x_t) - \text{min}_{x\in \Omega} \sum_{i=1}^T f_t(x)$.

Throughout this paper, we assume that $\Omega$ is a set with a bounded diameter $\Omega_\infty$ and the $l^\infty$ norm of the gradient function, $\nabla f_t$ (which we shall denote by $g_t$), $\Vert \nabla f_t(x)\Vert_\infty$ is bounded for all $t\in [T]$ and $x\in \Omega$ with bound $M$, that is $\Vert \nabla f_t(x)\Vert_\infty\leq M$.

\section{Revisiting the Convergence of AMSGrad : Introduction of C-Adam}\label{math}
Adam is one of the best known optimizers among various other optimizers discussed in Section \ref{introduction}. 

It is observed that Adam (refer Algorithm \ref{adam algo}) uses the unbiased second-order moment estimation of the gradient to calculate the adaptive learning rate. 

\begin{algorithm}[H]
\caption{Adam}
\label{adam algo}
\begin{algorithmic}[1]

\Require $x_1 \in \mathcal{F}$, step sizes $\{\alpha_t\}_{t=1}^T$, $\{\beta_{1t}\}_{t=1}^T$, $\beta_2$
\State Initialize $m_0 = 0$, $v_0 = 0$, $\bar{v}_0 = 0$

\For{$t = 1$ to $T$}
    \Statex $g_t = \nabla f_t(x_t)$
    \Statex $m_t = \beta_{1t} m_{t-1} + (1 - \beta_{1t}) g_t$
    \Statex $v_t = \beta_2 v_{t-1} + (1 - \beta_2) g_t^2$
    \Statex $\bar{m}_t = \dfrac{m_t}{1 - \beta_1^t}$
    \Statex $\bar{v}_t = \dfrac{v_t}{1 - \beta_2^t}$
    \Statex $\bar{V}_t = \mathrm{diag}(\bar{v}_t)$
    \Statex $x_{t+1} = \Pi_{\mathcal{F}, \sqrt{\bar{V}_t}} \left( x_t - \alpha_t \frac{\bar{m}_t}{\sqrt{\bar{v}_t + \epsilon}} \right)$
\EndFor
\end{algorithmic}
\end{algorithm}
Adam uses first moment $m_{t} = (1-\beta_{1})\sum_{i=1}^{t}{\beta_{1}^{t-i}g_{i}}$ and second moment $v_{t} = (1-\beta_{2})\sum_{i=1}^{t}{\beta_{2}^{t-i}g_{i}^{2}}$ to update the parameter $x_{t}$ as shown in Algorithm \ref{adam algo}, where $\beta_{1}$ and $\beta_{2}$ are hyperparameters.  $\left\{\beta_{1}, \beta_{2}\right\}$ are typically chosen to be $\left\{0.9, 0.999\right\}$ for satisfactory practical results. More generally any  $\left\{\beta_{1}, \beta_{2}\right\}$ with $\dfrac{\beta_{1}}{\sqrt{\beta_{2}}} < 1$, where $\beta_{1} \in (0,1)$ and $\beta_{2} \in (0,1)$, satisfies the required conditions for Algorithm \ref{adam algo} to converge to the optimal point. 

Whereas the AMSGrad algorithm (refer Algorithm \ref{ams algo}) uses the maximum of the previous unbiased second-order moment estimations of the gradient to calculate the adaptive learning rate. This way AMSGrad prevents the increase in the adaptive learning rate as the number of iterations increase and ensures the convergence of the model. 
\begin{algorithm}[H]
\caption{AMSGrad}
\label{ams algo}
\begin{algorithmic}[1]

\Require $x_1 \in \mathcal{F}$, step sizes $\{\alpha_t\}_{t=1}^T$, $\{\beta_{1t}\}_{t=1}^T$, $\beta_2$
\State Initialize $m_0 = 0$, $v_0 = 0$, $\hat{v}_0 = 0$, $\epsilon = 1e^{-8}$

\For{$t = 1$ to $T$}
    \Statex $g_t = \nabla f_t(x_t)$
    \Statex $m_t = \beta_{1t} m_{t-1} + (1 - \beta_{1t}) g_t$
    \Statex $v_t = \beta_2 \hat{v}_{t-1} + (1 - \beta_2) g_t^2$
    \Statex $\hat{v}_t = \max(v_{t}, \hat{v}_{t-1})$
    \Statex $\hat{V}_t = \mathrm{diag}(\hat{v}_t)$
    \Statex $x_{t+1} = \Pi_{\mathcal{F}, \sqrt{\hat{V}_t}} \left( x_t - \alpha_t \frac{\hat{m}_t}{\sqrt{\hat{v}_t + \epsilon}} \right)$
\EndFor
\end{algorithmic}
\end{algorithm}

AMSGrad was proposed as a variant of Adam with an improved convergence guarantee. Similar to Adam, AMSGrad computes first moment and second moment as $m_{t} = (1-\beta_{1})\sum_{i=1}^{t}{\beta_{1}^{t-i}g_{i}}$ and $v_{t} = (1-\beta_{2})\sum_{i=1}^{t}{\beta_{2}^{t-i}g_{i}^{2}}$ respectively, where $\beta_{1} \in (0,1)$ and $\beta_{2} \in (0,1)$ denote the exponential decay rates for the first and second moments. Similar to Adam, $\left\{\beta_{1}, \beta_{2}\right\} = \left\{0.9, 0.999\right\}$ is chosen for satisfactory performance. However, instead of updating $x_{t}$ using $v_{t}$, AMSGrad maintains the maximum of all past second moment estimates as $\hat{v}_{t} = \max(\hat{v}_{t-1}, v_{t})$, and updates $x_{t}$ as $x_{t+1} = x_t-\alpha_{t}\dfrac{ m_t}{\sqrt{\hat{v}_{t}+\epsilon}}$ as shown in Algorithm \ref{ams algo}.

AMSGrad was later found to have drawbacks \cite{HuangWangDong2018, TanYinLiuWan2019}. Since there is no bound on the maximum constraints of AMSGrad, the following issues arise:
\begin{enumerate}
    \item In the case when the maximum constraint is very high, the adaptive learning rate will be over conservative, which directly results in the slow convergence of AMSGrad. For example, in case of non-linear phenomena or noisy datasets, the gradient fluctuates repeatedly. Since AMSGrad forces $\hat{v_{t}}$ to be the $\max(v_{t}, \hat{v}_{t-1})$, the adaptive learning rate $\alpha_{t}/\sqrt{\hat{v_t} + \epsilon}$ is forced to be very small, resulting in slow convergence of the loss function as seen in Fig. \ref{noisy_data_result_comparison}. \label{issue1}

   \item The case when the maximum constraint is very low, the adaptive learning rate will be very high and AMSGrad will face oscillations in the losses. But this being a generic issue, we focus on attempting to rectify the problems arising from the previous issue. \label{issue2}
\end{enumerate}

To overcome the above mentioned disadvantages of Adam and AMSGrad, a new optimizer with an adaptive learning parameter, C-Adam, is introduced here. The second moment of C-Adam uses a convex combination of the maximum of the past seen second order moments defined as 
\begin{equation}
    \tilde{v_{t}} = \lambda\tilde{v}_{t-1} + (1-\lambda)\max(\tilde{v}_{t-1}, v_{t}),
\end{equation}
where we choose $\lambda = \dfrac{\tilde{v}_{t-1}}{\max(\tilde{v}_{t-1}, v_{t})}$.

The algorithm for C-Adam is given below (refer Algorithm \ref{customadam algo}).\\

\begin{algorithm}
\caption{C-Adam}
\label{customadam algo}
\begin{algorithmic}[1]

\Require$x_1 \in \mathcal{F}$, step sizes $\{\alpha_t\}_{t=1}^{T}$, 
$\{\beta_{1t}\}_{t=1}^{T}$, $\beta_2$
\State Initialize $m_0 = 0$, $v_0 = 0$, $\tilde{v}_0 = 0$

\For{$t = 1$ to $T$}
    \Statex $g_t = \nabla f_t(x_t)$
    
    \Statex $m_t = \beta_{1t} m_{t-1} + (1 - \beta_{1t}) g_t$

    \Statex $v_t = \beta_2 \tilde{v}_{t-1} + (1 - \beta_2) g_t^2$

    \Statex $\lambda = \tilde{v}_{t-1} / \max(\tilde{v}_{t-1}, v_t)$

    \Statex $\tilde{v}_t = (1-\lambda)\max(\tilde{v}_{t-1}, v_t) + \lambda\tilde{v}_{t-1}$
    
    \Statex $\tilde{V}_t = \operatorname{diag}(\tilde{v}_t)$
    
    \Statex $x_{t+1} = \Pi_{\mathcal{F}, \sqrt{\tilde{V}_t}}
    \left(x_t - \alpha_t \frac{m_t}{\sqrt{\tilde{v}_t+\epsilon}} \right)$
\EndFor{}
\end{algorithmic}
\end{algorithm}

We discuss the reason for the choice of $\lambda$ as above. Our main aim is to introduce a new optimizer that performs better than the existing ones, with a particular focus on Adam and AMSGrad. The case when $v_{t-1} > v_{t}$, Adam need not have a monotonic behavior, meanwhile AMSGrad provides the monotonicity and thus theoretical convergence. We use the same idea and analysis of AMSGrad for C-Adam and define $\tilde{v}_t = v_{t-1}$. On the other hand, when $v_{t-1} \leq v_{t}$, AMSGrad might face the issue of premature vanishing step size. In this case, we move towards a penalized version of AMSGrad and introduce a customized combination of Adam and AMSGrad. To satisfy these requirements we choose, $\lambda = \dfrac{\tilde{v}_{t-1}}{\max(\tilde{v}_{t-1}, v_{t})}$. Note that this leads to our two cases (which shall be used throughout the paper). Case 1 : $\tilde{v}_{t-1} > v_{t}$ in which $\lambda=1$ and $\tilde{v}_t=\tilde{v}_{t-1}$, and Case 2 : $\tilde{v}_{t-1} \leq v_{t}$ leading to $\tilde{v}_t=\lambda \tilde{v}_{t-1}+(1-\lambda)v_t$.

Let us consider the T$^{th}$ iteration, where $\max(\tilde{v}_{t-1}, v_t)$ yields the $v_t$ and $\tilde{v}_{t}$ update using the following update rule $\tilde{v}_{t} = \lambda\tilde{v}_{t-1} + (1-\lambda)v_{t}$, which takes a value somewhere between ${v}_{t}$ and $\tilde{v}_{t-1}$. This allows the adaptive learning rate to be significantly large enough to avoid stagnation and reduces the adverse effect of the maximization constraint of AMSGrad. This change in the algorithm has a drastic effect in the performace and that can be clearly seen in Fig. \ref{noisy_data_result_comparison}. In the case of noisy dataset, the gradients of the Adam and AMSGrad fluctuate due to the rapid changes in the adaptive learning rate that leads to a poor prediction as highlighted in Fig. \ref{noisy_data_result_comparison}.

\begin{figure}[]
    \centering

    \begin{subfigure}[t]{0.4\textwidth}
        \centering
        \begin{overpic}[width=\linewidth]{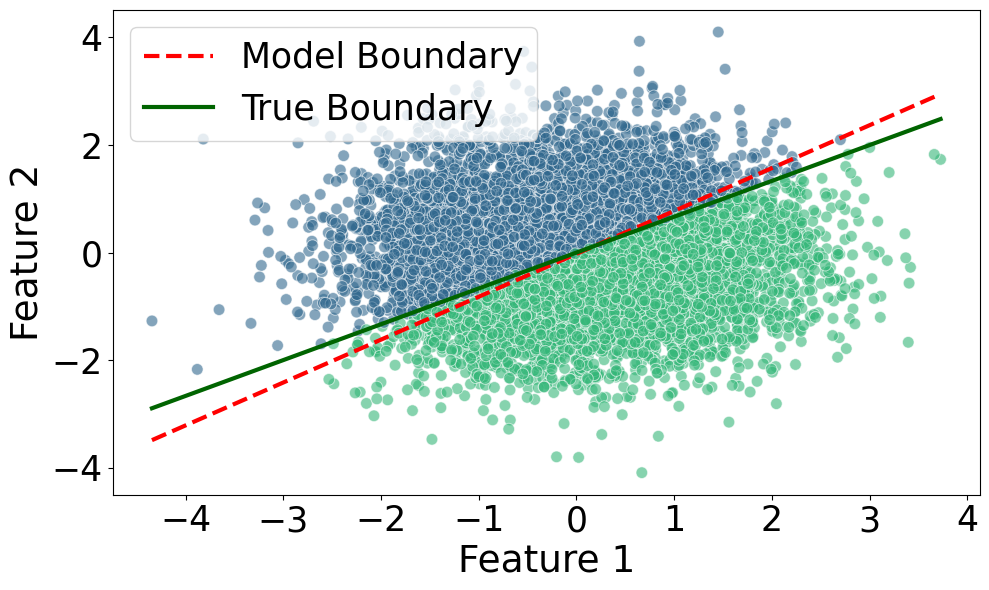}
        \end{overpic}
        \vspace{0.1cm}
        {\small (a) Adam}
    \end{subfigure}
    \hspace{0.1cm}
    \begin{subfigure}[t]{0.4\textwidth}
        \centering
        \begin{overpic}[width=\linewidth]{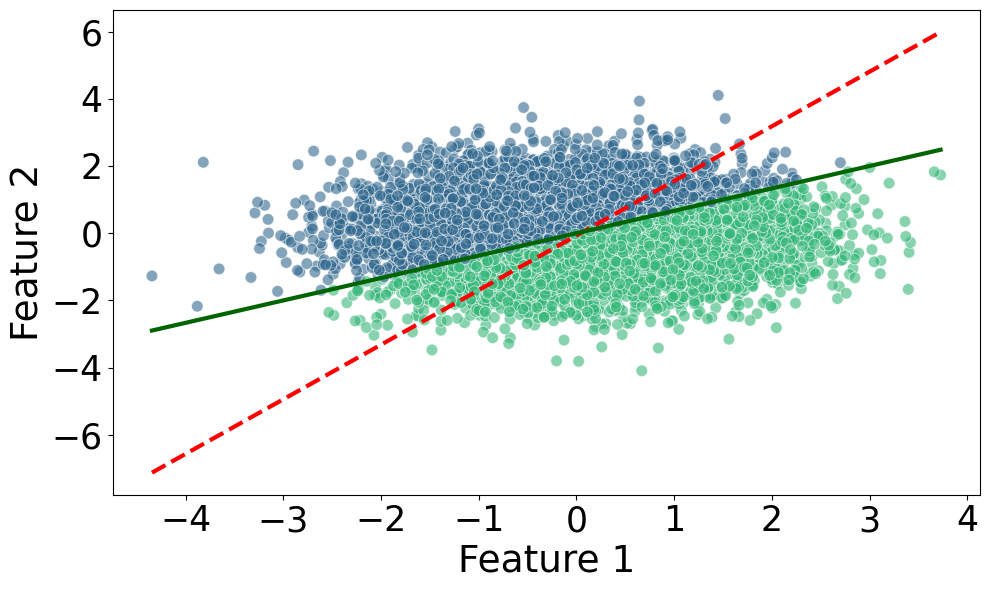}
        \end{overpic}
        \vspace{0.1cm}
        {\small (b) AMSGrad}
    \end{subfigure}
    \vspace{0.3cm}
    \begin{subfigure}[t]{0.4\textwidth}
        \centering
        \begin{overpic}[width=\linewidth]{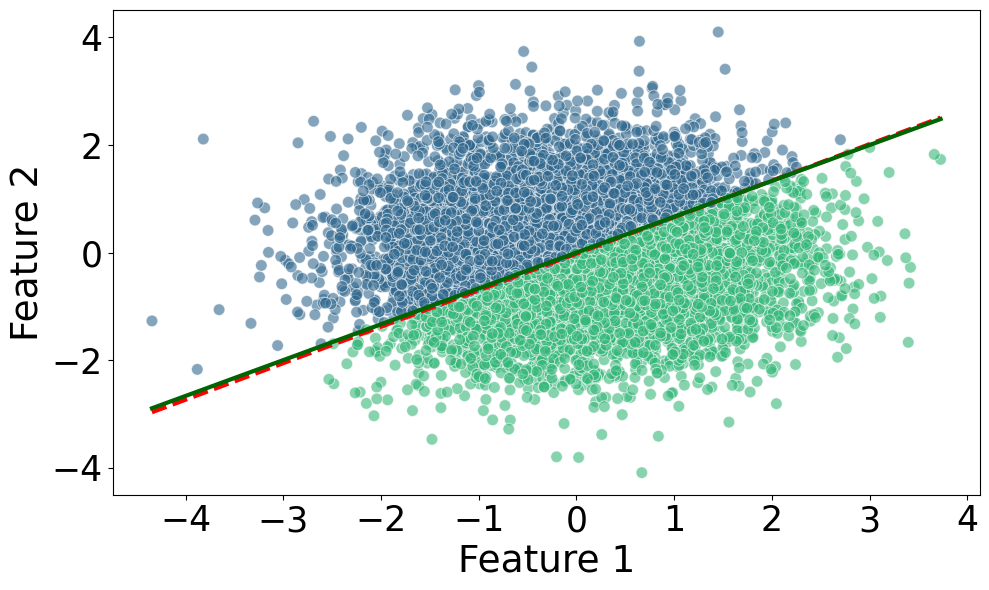}
        \end{overpic}
        \vspace{0.1cm}
        {\small (c) C-Adam}
    \end{subfigure}

    \caption{Data points along with the true boundary and the corresponding linear classifier-based model boundary created by using the optimizer (a) Adam, (b) AMSGrad and (c) C-Adam. The classifier decision boundary is trained on 10000 two-dimensional randomly generated data points, with 10$\%$ noise induced by flipping the labels in both the training and validation sets. Blue and green dots correspond to the two classes used.}
    \label{noisy_data_result_comparison}

\end{figure}

\begin{rem}
The non-negativity of 
    $\displaystyle \psi_{t} = \left( \frac{\sqrt{v_{t}}}{\alpha_{t}} - \frac{\sqrt{v_{t-1}}}{\alpha_{t-1}} \right)$
    is not guaranteed by Adam in an online convex optimization setting. This may lead to non-convergence as discussed in \cite{ReddiKaleKumar2019}. The AMSGrad algorithm worked on this issue and guaranteed $\psi_t \geq 0$ for all $t\in [T]$. We show that C-Adam also guarantees this, thus resolving issues related to the convergence of the model. We prove this in the following lemma. 
\end{rem} 

As mentioned earlier, throughout our proofs we consider two cases: Case 1 when $\tilde{v}_{t-1}$ is the maximum and Case 2 when $v_t$ is the maximum. Also inequalities and equalities applied on vectors are entry wise.
\begin{lem}
    Consider an online convex optimization problem. The optimization rule for C-Adam guarantees that $ \dfrac{\sqrt{\tilde{v}_{t}}}{\alpha_{t}}$ is an increasing function.
\end{lem}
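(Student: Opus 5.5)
The plan is to reduce the statement to two separate monotonicity facts: the step sizes $\alpha_t$ are non-increasing, and the second-moment estimates $\tilde{v}_t$ are entrywise non-decreasing. Combining them gives
\[
\frac{\sqrt{\tilde{v}_{t}}}{\alpha_{t}} \;\geq\; \frac{\sqrt{\tilde{v}_{t-1}}}{\alpha_{t}} \;\geq\; \frac{\sqrt{\tilde{v}_{t-1}}}{\alpha_{t-1}},
\]
entrywise. The lemma does not state the step-size assumption, so I will take it as the standing convention inherited from the AMSGrad analysis: $\alpha_t > 0$ and non-increasing, for instance $\alpha_t = \alpha/\sqrt{t}$. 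With that in place, the second inequality is immediate, because $\sqrt{\tilde{v}_{t-1}} \geq 0$ and $1/\alpha_t \geq 1/\alpha_{t-1}$. The first inequality follows from monotonicity of the square root once $\tilde{v}_t \geq \tilde{v}_{t-1}$ is known. Since all operations are entrywise, it suffices to argue coordinate by coordinate with scalars.

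Write $a = \tilde{v}_{t-1,i} \geq 0$ and $b = v_{t,i}$, and use the two cases fixed earlier in the paper.

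\emph{Case 1} ($a > b$). Here $\lambda = 1$ and $\tilde{v}_{t,i} = a$, so equality holds.

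\emph{Case 2} ($a \leq b$). Here $\lambda = a/b$ and
\[
\tilde{v}_{t,i} = \frac{a}{b}\,a + \Bigl(1 - \frac{a}{b}\Bigr) b = \frac{a^2}{b} + b - a .
\]
Subtracting $a$ gives
\[
\tilde{v}_{t,i} - a = \frac{a^2 - 2ab + b^2}{b} = \frac{(b-a)^2}{b} \geq 0 .
\]
This identity is the key step. It shows that the convex combination, with this particular state-dependent $\lambda$, never drops below the previous estimate, even though $\lambda$ itself depends on $v_t$.

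The only real obstacle is the degenerate situation $b = 0$ in Case 2, where $\lambda = 0/0$ is undefined. Since $a \leq b = 0$ forces $a = 0$, I will adopt the natural convention $\lambda = 1$ (equivalently, $\tilde{v}_{t,i} = 0$), which gives equality. Alternatively, one can note that with $\epsilon$ added, or with $v_0 = 0$ and $g_t$ nonzero, this case is harmless. I would also remark that in Case 2 we additionally have $\tilde{v}_{t,i} \leq b$, since $b - \tilde{v}_{t,i} = a(b-a)/b \geq 0$. This is not needed for the lemma, but it confirms that $\tilde{v}_t \leq \max(\tilde{v}_{t-1}, v_t)$, the comparison with AMSGrad used later.

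Finally, combining the two cases gives $\tilde{v}_t \geq \tilde{v}_{t-1}$ for all $t$, and the chain of inequalities above yields $\psi_t \geq 0$ in the sense of the preceding Remark, with $v$ replaced by $\tilde{v}$.
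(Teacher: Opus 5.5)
Your proof is correct and follows the paper's argument: the same two-case split, with Case 2 reducing to nonnegativity of the increment $\tilde v_{t,i}-\tilde v_{t-1,i}=(1-\lambda_i)(v_{t,i}-\tilde v_{t-1,i})$, which you write out in closed form as $(v_{t,i}-\tilde v_{t-1,i})^2/v_{t,i}$. The differences are refinements. You handle the step size by assuming $\alpha_t$ non-increasing, which is what the paper's ``without loss of generality, $\alpha_t=\alpha$'' tacitly requires (for instance for $\alpha_t=\alpha/\sqrt{t}$ in the later theorem), and you dispose of the $0/0$ case in $\lambda$, which the paper ignores.
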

\begin{proof}
    The second moment of C-Adam is $\tilde{v}_{t} = \lambda\tilde{v}_{t-1} + (1-\lambda)\max(\tilde{v}_{t-1}, v_{t})$ with $\lambda = \dfrac{\tilde{v}_{t-1}}{\max(\tilde{v}_{t-1}, v_{t})}$. Without loss of generality, we consider the step size to be a constant, i.e., $\alpha_t = \alpha$.

    \textit{Case 1:} $v_{t} < \tilde{v}_{t-1}$\\
In this case $\lambda = 1$ and $\tilde{v}_{t} = \tilde{v}_{t-1}$ leading to $\psi_t$ becoming 0, thus satisfying the condition.

\textit{Case 2:} $v_{t} \geq \tilde{v}_{t-1}$\\
In this case $\lambda = \dfrac{\tilde{v}_{t-1}}{v_{t}}$ and $\tilde{v}_{t} = \lambda\tilde{v}_{t-1} + (1-\lambda)v_{t}$. We have
\begin{align*}
\psi_{t} &= \dfrac{\sqrt{\tilde{v}_t}}{\alpha} - \dfrac{\sqrt{\tilde{v}_{t-1}}}{\alpha}\\
         &= \dfrac{1}{\alpha}(\sqrt{\lambda\tilde{v}_{t-1} + (1-\lambda)\max(\tilde{v}_{t-1}, v_{t})} -  \sqrt{\tilde{v}_{t-1}})\\ 
         &= \dfrac{1}{\alpha}(\sqrt{\lambda\tilde{v}_{t-1} + (1-\lambda)v_{t}} -  \sqrt{\tilde{v}_{t-1}})\\
        &= \dfrac{1}{\alpha}(\sqrt{\lambda\tilde{v}_{t-1} + \tilde{v}_{t-1} - \tilde{v}_{t-1} + (1-\lambda)v_{t}} -  \sqrt{\tilde{v}_{t-1}})\\
        &= \dfrac{1}{\alpha}(\sqrt{\tilde{v}_{t-1} + (1 - \lambda)(v_{t} - \tilde{v}_{t-1})} - \sqrt{\tilde{v}_{t-1}})\\
        &\geq 0 \ (\text{since}\ (1-\lambda)\geq 0\ \text{and}\ (v_t-\tilde{v}_{t-1})\geq 0)
\end{align*}
In both the cases $\psi \geq 0$, which yields the desired result.
\end{proof}

\begin{lem}\label{mt in terms of vt}
Let $m_{t}$ and $v_{t}$ be the first and second moments defined in Algorithm \ref{adam algo} at time t. Choose $\beta_1$ and $\beta_2$ such that $\delta=\dfrac{\beta_{1}}{\sqrt{\beta_{2}}} < 1$. Then 
$$m_{t} \leq \dfrac{(1-\beta_{1})}{\sqrt{1-\beta_{2}}}\sqrt{\dfrac{v_t}{1-\delta^2}}.$$ 
\begin{proof} We have
        $$v_{t} = (1-\beta_{2})\sum_{j=1}^t{\beta_{2}^{t-j}g_j^{2}},\ \ 
        m_{t} = (1 - \beta_{1})\sum_{i=1}^{t}{\beta_{1}^{t-i}g_{i}}$$
 Applying Cauchy-Schwarz inequality, we get
    \begin{align*}
        m_{t} &\leq  (1-\beta_{1})\left(\sum_{i=1}^{t}{g_{i}^{2}}\right)^{1/2}\left(\sum_{i=1}^{t}       {(\beta_{1}^{t-i})^{2}}\right)^{1/2} \\
        &\leq  (1-\beta_{1})\left(\sum_{i=1}^{t}{\beta_{2}^{t-i}g_{i}^{2}}\right)^{1/2}\left(\sum_{i=1}^{t}{\left(\dfrac{\beta_{1}^{2}}{\beta_{2}}\right)^{t-i}}\right)^{1/2} \\
        &\leq \dfrac{(1-\beta_{1})}{\sqrt{1-\beta_{2}}}\left((1-\beta_{2})\left(\sum_{i=1}^{t}{\beta_{2}^{t-i}g_{i}^{2}}\right)\right)^{1/2}\left(\sum_{i=1}^{t}{\left(\dfrac{\beta_{1}^{2}}{\beta_{2}}\right)^{t-i}}\right)^{1/2} \\
        &\leq \dfrac{(1-\beta_{1})}{\sqrt{1-\beta_{2}}}\sqrt{v_{t}}\left(\sum_{i=1}^{t}{\left(\dfrac{\beta_{1}^{2}}{\beta_{2}}\right)^{t-i}}\right)^{1/2} \\
        &\leq \dfrac{(1-\beta_{1})}{\sqrt{1-\beta_{2}}}\sqrt{v_{t}}\sqrt{\dfrac{\beta_{2}}{\beta_{2} - \beta_{1}^{2}}}
    \end{align*}
   The proof is completed by putting $\dfrac{\beta_1}{\sqrt{\beta_2}}=\delta$.
\end{proof}
\end{lem}

\begin{lem}\label{proof for bound}
Let us consider C-Adam and consider the parameter assumptions as defined in Lemma \ref{mt in terms of vt}. Let $\zeta = \dfrac{\alpha \sqrt{(1+\log T)}}{(1-\delta)\sqrt{1-\beta_2}}\displaystyle\sum_{i=1}^d{\lVert g_{1:T, i}\rVert_{2}}$. Then we have the following bound:
 $$\sum_{t=1}^T {\alpha_{t}\lVert \tilde{V}_{t}^{-1/4}m_{t} \rVert^2} \leq \zeta \max( K_1, K_2),$$
 where $K_1 = \dfrac{1}{(1-\beta_{1})}$ and  $K_2 = \dfrac{(1-\beta_{1})^{2}\lambda_{\infty}}{(1-\sqrt{\beta_{2}})(1+\delta)}$ with $\lambda_\infty=\displaystyle\max_i\frac{1}{\sqrt{1-\lambda_i}}$.

\begin{proof}
We discuss the two cases that arise in the updating step of the optimizer C-Adam.

\textit{Case 1:} Suppose $\max(v_{t}, \tilde{v}_{t-1}) = \hat{v}_{t-1}$. Then $\tilde{v}_{t} \geq v_{t}$ and from \cite{ReddiKaleKumar2019}, we have
\begin{align*}
    \sum_{t=1}^T{\alpha_{t}\lVert \tilde{V}_{t}^{-1/4}m_{t} \rVert^2} &\leq \dfrac{\alpha\sqrt{(1+\log{T})\beta_{2}}}{(1-\beta_{1})(\sqrt{\beta_2}-\beta_{1})\sqrt{1-\beta_{2}}}\sum_{i=1}^d{\lVert g_{1:T, i}\rVert_{2}}=\zeta K_1
\end{align*}
\textit{Case 2: } Suppose $\max(v_{t}, \tilde{v}_{t-1}) = v_{t}$, using the nonnegativity of $\tilde{v}_{t-1}$, we have
$$\tilde{v}_{t} \geq (1-\lambda)v_{t} + \lambda\tilde{v}_{t-1}\geq (1-\lambda)v_{t}$$
    
Now we try to find an upper bound for the above quantity.
\begin{align*}
    \sum_{t=1}^T{\alpha_{t}\lVert \tilde{V}_{t}^{-1/4}m_{t} \rVert^2} &= \sum_{t=1}^{T-1}{\alpha_{t}\lVert \tilde{V}_{t}^{-1/4}m_{t} \rVert^2} + \alpha_{T}\sum_{i=1}^d \dfrac{m_{T,i}^2}{\sqrt{\tilde{v}_{T,i}}} \\
    &\leq \sum_{t=1}^{T-1}{\alpha_{t}\lVert \tilde{V}_{t}^{-1/4}m_{t} \rVert^2} + \alpha_{T}\sum_{i=1}^d \dfrac{m_{T,i}^2}{\sqrt{(1-\lambda_{i})v_{T,i}}}
\end{align*}
Using Lemma \ref{mt in terms of vt} we have $m_{t} \leq  \dfrac{(1-\beta_{1})}{\sqrt{1-\beta_{2}}}\sqrt{\dfrac{v_{t}}{1-\delta^2}}$ in the second term of the above inequality, we have

\begin{align*}
    \alpha_{T}\sum_{i=1}^d 
     \dfrac{m_{T,i}^2}{\sqrt{(1-\lambda_{i})v_{T,i}}} 
     &\leq \alpha_{T}\sum_{i=1}^d \dfrac{(1-\beta_{1})^{2}v_{T,i}}{(1-\beta_{2})(1-\delta^2)\sqrt{(1-\lambda_{i})v_{T,i}}}\\
     &\leq \alpha_{T}\sum_{i=1}^{d}\dfrac{(1-\beta_{1})^{2}\lambda_{\infty}\sqrt{(1-\beta_{2})\sum_{j=1}^T{\beta_{2}^{T-j}g_{j,i}^{2}}}}{(1-\beta_{2})(1-\delta^2)} \\
     &\leq \alpha\sum_{i=1}^{d}{\dfrac{(1-\beta_{1})^{2}\lambda_{\infty}}{\sqrt{1-\beta_2}(1-\delta^2)}\sum_{j=1}^{T}{\sqrt{\dfrac{\beta_{2}^{T-j}g_{j,i}^{2}}{T}}}}
\end{align*}
Now the above equation can be written as 
\begin{align*}
    \sum_{t=1}^T{\alpha_{t}\lVert \tilde{V}_{t}^{-1/4}m_{t} \rVert^2} &\leq \sum_{t=1}^{T-1}{\alpha_{t}\lVert \tilde{V}_{t}^{-1/4}m_{t} \rVert^2} + \alpha\sum_{i=1}^{d}{\dfrac{(1-\beta_{1})^{2}\lambda_{\infty}}{\sqrt{1-\beta_2}(1-\delta^2)}\sum_{j=1}^{T}{\sqrt{\dfrac{\beta_{2}^{T-j}g_{j,i}^{2}}{T}}}}\\
    &=  \alpha\sum_{t=1}^{T}{\sum_{i=1}^{d}{\dfrac{(1-\beta_{1})^{2}\lambda_{\infty}}{\sqrt{1-\beta_2}(1-\delta^2)}\sum_{j=1}^{t}{\sqrt{\dfrac{\beta_{2}^{t-j}g_{j,i}^{2}}{t}}}}}\\
    &= \alpha \dfrac{(1-\beta_{1})^{2}\lambda_{\infty}}{\sqrt{1-\beta_2}(1-\delta^2)}\sum_{i=1}^{d}{\sum_{t=1}^{T}\dfrac{1}{\sqrt{t}}{\sum_{j=1}^{t}{\sqrt{\beta_2^{t-j}g_{j,i}^{2}}}}} \\
     &\leq \alpha \dfrac{(1-\beta_{1})^{2}\lambda_{\infty}}{\sqrt{1-\beta_2}(1-\delta^2)}\sum_{i=1}^{d}{\sum_{t=1}^{T}{\dfrac{|g_{t,i}|}{\sqrt{t}(1-\sqrt{\beta_{2}})}}}\\
    &\leq \alpha \dfrac{(1-\beta_{1})^{2}\lambda_{\infty}}{\sqrt{1-\beta_2}(1-\sqrt{\beta_{2}})(1-\delta^2)}\sum_{i=1}^{d}{\sum_{t=1}^{T}{\dfrac{|g_{t,i}|}{\sqrt{t}}}}\\
    &\leq \alpha \dfrac{(1-\beta_{1})^{2}\lambda_{\infty}}{\sqrt{1-\beta_2}(1-\sqrt{\beta_{2}})(1-\delta^2)}\sum_{i=1}^{d}{(\sum_{t=1}^{T}{\dfrac{1}{t}})^{1/2}(\sum_{t=1}^{T}{|g_{t,i}|^{2}})^{1/2}}\\
    &\leq \alpha \dfrac{(1-\beta_{1})^{2}\lambda_{\infty}\sqrt{1+\log{T}}}{\sqrt{1-\beta_2}(1-\sqrt{\beta_{2}})(1-\delta^2)}\sum_{i=1}^{d}{\lVert g_{1:T,i}\rVert_{2}}=\zeta K_2
\end{align*}
\end{proof}
\end{lem}

\begin{thm}\label{custom adam bound}
Let $\{x_t\}, \{m_t\}$, and $\{\tilde{v}_t\}$ be the sequences obtained from the C-Adam \ref{customadam algo}. Choose $\alpha$ and let $\alpha_t=\alpha/\sqrt{t}$, and choose $\beta_1$ and $\beta_2$ such that $\beta_{1t} \leq \beta_1, \; t \in [T]$, and $\frac{\beta_1}{\sqrt{\beta_2}} \leq 1$. Assume that the domain $(\Omega)$ has a bounded diameter $\Omega_{\infty}$ and the convex function $f$ has bounded gradients $\lVert \nabla f_t (x) \rVert \leq M$ for all $t \in [T]$ and $x \in \Omega$. Then the C-Adam has following regret bound:

 $$\sum_{t=1}^T f_{t}(x_{t}) - f_{t}(x^{*}) \leq \dfrac{\Omega_{\infty}^{2}}{2\alpha_{T}(1-\beta_{1})}\sum_{i=1}^{d}{\tilde{v}_{T,i}^{1/2}} + \dfrac{\Omega_{\infty}^{2}}{(1-\beta_{1})^{2}}\sum_{t=1}^{T}{\sum_{i=1}^{d}{\dfrac{\beta_{1t}\tilde{v}_{t,i}^{1/2}}{\alpha_{t}}}} +\zeta \max(K_1, K_2)$$
 where $\zeta$, $K_1$, $K_2$ and $\lambda_\infty$ are as mentioned in Lemma \ref{proof for bound}.

\begin{proof}
    Using Taylor series we know that,
\begin{align*}
f(y) = f(x) + \nabla f(x)(y - x) + \dfrac{1}{2}(y-x)^{T}\nabla^{2}f(x)(y-x)
\end{align*}
Since $f$ is convex, the Hessian of $f$ is positive semi definite, that is, $\dfrac{1}{2}(y-x)^{T}\nabla^{2}f(x)(y-x) \geq  0$.

Thus we have
\begin{align*}
    f(x) - f(y) &\leq \nabla f(x)(y - x) \\
    f(x) - f(x^*) &\leq \nabla f(x)(x^* - x),\ \text{for } y=x^*
\end{align*}
Using the definition of projection,
\begin{align*}
    x_{t+1} &= \Pi_{\mathcal{F}, \tilde{V}_t^{1/2}}
    \left(x_t - \alpha_t\tilde{V}_t^{-1/2}m_t \right)\\
    &= \min_{x \in \mathcal{F}}\lVert \tilde{V}_t^{1/4}(x - (x_t - \alpha_t\tilde{V}_t^{-1/2}m_t)) \rVert
\end{align*}
Then we have,
\begin{align*}
    \lVert \tilde{V}_{t}^{1/4}(x_{t+1} - x^{*})\rVert^{2} 
    &= \lVert \tilde{V}_t^{1/4}(x_t  - \alpha_t\tilde{V}_t^{-1/2}m_t - x^{*}) \rVert^2 \\
    &= \lVert \tilde{V}_{t}^{1/4}(x_{t} - x^{*})\rVert^2 + \alpha_t^{2} \lVert\tilde{V}_{t}^{-1/4}m_{t}\rVert^2  
     -2\alpha_{t}\langle \beta_{1t}m_{t-1} +
    (1-\beta_{1t})g_{t}, (x_{t} - x^{*})\rangle 
\end{align*}
Upon rearranging the above equation, we get
\begin{align*}
    \langle g_{t}, (x_{t} - x^{*}) \rangle &= \dfrac{1}{2\alpha_{t}(1-\beta_{1t})}\left[\lVert \tilde{V}_{t}^{1/4}(x_{t} - x^{*})\rVert^2 - \lVert \tilde{V}_{t}^{1/4}(x_{t+1} - x^{*})\rVert^2\right] \\ & \hspace{1cm}+\dfrac{\alpha_{t}}{2(1-\beta_{1t})}\lVert\tilde{V}_{t}^{-1/4}m_{t}\rVert^2 + \dfrac{\beta_{1t}}{(1-\beta_{1t})}\langle m_{t-1}, (x_{t} - x^{*}) \rangle
    \end{align*}
    Using Young's inequality in the last term, we have
    \begin{align*}
\langle m_{t-1}, (x_{t} - x^{*}) \rangle &= \langle \alpha_t \tilde{V}_t^{-1/4} m_{t-1}, \dfrac{1}{\alpha_t}\tilde{V}_t^{1/4}(x_{t} - x^{*}) \rangle \\
    &\leq \dfrac{\alpha_{t}}{2}\lVert\tilde{V}_{t}^{-1/4}m_{t}\rVert^2 
+\dfrac{1}{2\alpha_t}\lVert \tilde{V}_{t}^{1/4}(x_{t} - x^{*})\rVert^2 
\end{align*}
Combining the above and using the fact that $\beta_1\geq \beta_{1t}$ for all $t$ for terms in the denominator, we get
\begin{align*}
\langle g_t, (x_t-x^*)\rangle &\leq  \dfrac{1}{2\alpha_{t}(1-\beta_{1})}\left[\lVert \tilde{V}_{t}^{1/4}(x_{t} - x^{*})\rVert^2 - \lVert \tilde{V}_{t}^{1/4}(x_{t+1} - x^{*})\rVert^2\right] \\
    &\hspace{1cm}+\dfrac{\alpha_{t}}{(1-\beta_{1})}\lVert\tilde{V}_{t}^{-1/4}m_{t}\rVert^2 + \dfrac{\beta_{1t}}{2\alpha_{t}(1-\beta_{1})}\lVert \tilde{V}_{t}^{1/4}(x_{t} - x^{*})\rVert^2
\end{align*}
Now we know that $f_{t}(x_{t}) - f_{t}(x^{*}) \leq \langle g_t, (x_{t} - x^{*})\rangle$.\\

Using the bounds obtained in Lemma \ref{proof for bound} and \cite[Theorem 4]{ReddiKaleKumar2019}, we have 
\begin{align*}
    \sum_{t=1}^T f_{t}(x_{t}) - f_{t}(x^{*}) &\leq \dfrac{1}{2\alpha_{1}(1-\beta_{1})}\sum_{i=1}^{d}{\tilde{v}_{1,i}^{1/2}(x_{1,i} - x_{i}^{*})^{2}}  +\dfrac{1}{2(1-\beta_{1})}\sum_{t=2}^{T}{\sum_{i=1}^{d}{(x_{t,i} - x_{i}^{*})^{2}\left[\dfrac{\tilde{v}_{t,i}^{1/2}}{\alpha_{t}} - \dfrac{\tilde{v}_{t-1,i}^{1/2}}{\alpha_{t-1}}\right]}}  \\
    &\hspace{1cm}+\dfrac{1}{(1-\beta_{1})^{2}}\sum_{t=1}^{T}\sum_{i=1}^{d}\dfrac{\beta_{1t}(x_{t,i} - x_{i}^{*})^{2}}{\alpha_t}\tilde{v}_{t,i}^{1/2} + \zeta \max(K_1, K_2)\\
    &\leq \dfrac{1}{2\alpha_{T}(1-\beta_{1})}\sum_{i=1}^{d}{\Omega_{\infty}^{2}\tilde{v}_{T,i}^{1/2}} + 
    \dfrac{1}{2(1-\beta_{1})}\sum_{t=2}^{T}{\sum_{i=1}^{d}{\Omega_{\infty}^{2}\left[\dfrac{\tilde{v}_{t,i}^{1/2}}{\alpha_{t}} - \dfrac{\tilde{v}_{t-1,i}^{1/2}}{\alpha_{t-1}}\right]}} \\
    &\hspace{1cm}+\dfrac{1}{(1-\beta_{1})^{2}}\sum_{t=1}^{T}{\sum_{i=1}^{d}{\dfrac{\Omega_{\infty}^{2}\beta_{1t}}{\alpha_{t}}}}\tilde{v}_{t,i}^{1/2} + \zeta \max(K_1, K_2) \\
    &\leq \dfrac{\Omega_{\infty}^{2}}{2\alpha_{T}(1-\beta_{1})}\sum_{i=1}^{d}{\tilde{v}_{T,i}^{1/2}} + \dfrac{\Omega_{\infty}^{2}}{(1-\beta_{1})^{2}}\sum_{t=1}^{T}{\sum_{i=1}^{d}{\dfrac{\beta_{1t}}{\alpha_{t}}}}\tilde{v}_{t,i}^{1/2} +\zeta \max(K_1, K_2)
\end{align*}
Thus the regret bound for C-Adam for the above mentioned conditions is derived to be,
\begin{align*}
    \sum_{t=1}^T f_{t}(x_{t}) - f_{t}(x^{*}) &\leq \dfrac{\Omega_{\infty}^{2}}{2\alpha_{T}(1-\beta_{1})}\sum_{i=1}^{d}{\tilde{v}_{T,i}^{1/2}} + \dfrac{\Omega_{\infty}^{2}}{(1-\beta_{1})^{2}}\sum_{t=1}^{T}{\sum_{i=1}^{d}{\dfrac{\beta_{1t}\tilde{v}_{t,i}^{1/2}}{\alpha_{t}}}} +\zeta \max(K_1, K_2)
\end{align*}
where $\zeta$, $K_1$ and $K_2$ are as mentioned in Lemma \ref{proof for bound}.
\end{proof}
\end{thm}

\begin{rem}
    In the above theorem and lemmas, we are assuming that throughout iterations 1 to T, either case 1 or case 2 applies. Allowing a mix of these cases during iterations will render the proof tedious. Hence, for the sake of simplicity, we assume that either of the cases appear across all iterations, even if that may not be the case practically.
\end{rem}

\section{Experimental Validation of C-Adam}\label{exp}

In this section, the proposed optimizer is empirically evaluated on both synthetic and real-world datasets. Specifically, multiclass classification is studied using logistic regression, a fully connected neural network, and a convolutional neural network, with three optimizers. The experimental settings used are as shown in Table \ref{tldr hyperparameter}. Also, irrespective of the experiments and optimizers, the cross-entropy-based loss function is used throughout this section.

\begin{table}[]
\centering
\caption{Hyperparameter settings used across experiments.}
\label{tldr hyperparameter}
\small
\begin{tabular}{p{3.2cm}cccccccc}
\toprule
\textbf{Experiment} & $\alpha_0$ & $\beta_1$ & $\beta_2$ & $\epsilon$ & Batch & Iter. & Reg. \\
\midrule
Synthetic Experiment & 0.5 & 0.9 & 0.99 & $10^{-8}$  & 1 & $5\times 10^6$ & None \\
Logistic Regression & $10^{-3}$ & 0.9 & 0.999 & $10^{-12}$ & 128 & 200 & $10^{-4}$ \\
Fully Connected Neural Network & $10^{-3}$ & 0.9 & 0.999 & $10^{-12}$ & 128 & 500 & $10^{-4}$ \\
Convolutional Neural Network & $10^{-3}$ & 0.9 & 0.999 & $10^{-8}$ & 32 & 3500 & None \\
\bottomrule
\end{tabular}
\end{table}

\subsection{Synthetic Experiment}\label{syn exp section}

Here, the three optimisers are compared on the following problem, and the corresponding results are shown in Fig.~\ref{fig:adam_vs_amsgrad_vs_customadam}:

\begin{equation}\label{synthetic exp}
f_t(x) =
\begin{cases}
1010x, & \text{with } p=0.01, \\
-10x, & \text{otherwise}.
\end{cases}
\end{equation}

Here $\epsilon$ is set to $10^{-8}$ to ensure numerical stability and to avoid division by zero in the adaptive update. Under this setting, rather than converging to the optimal point $x=-1$ as reported in \cite{ReddiKaleKumar2019}, Adam is found to converge to the opposite extreme of the feasible set, that is $x=1$, which is a highly suboptimal solution (Fig.~\ref{fig:adam_vs_amsgrad_vs_customadam}a). Adam also exhibits a non-zero asymptotic average regret ($R_t/t$), indicating that the updates remain systematically suboptimal over time (Fig.~\ref{fig:adam_vs_amsgrad_vs_customadam}b). In contrast, both AMSGrad and C-Adam converge to the optimal solution, with C-Adam reaching it in fewer epochs than AMSGrad (Fig.~\ref{fig:adam_vs_amsgrad_vs_customadam}a). This faster convergence may be attributed to the less conservative adaptive update of C-Adam, which allows more effective early-stage progress toward the optimum. This trend is further reflected in Fig.~\ref{fig:adam_vs_amsgrad_vs_customadam}(b), where C-Adam exhibits lower regret than AMSGrad. The non-smoothness of the graph in Fig.~\ref{fig:adam_vs_amsgrad_vs_customadam}(a) is due to the stochastic nature of the problem. For the deterministic variant of the problem \ref{synthetic exp}, the results are almost comparable to the AMSGrad, and the difference in the optimal points gained from both the optimizers is in the order of $10^{-6}$.

\begin{figure}[]
    \centering

    \begin{subfigure}[t]{0.48\textwidth}
        \centering
        \begin{overpic}[width=\linewidth]{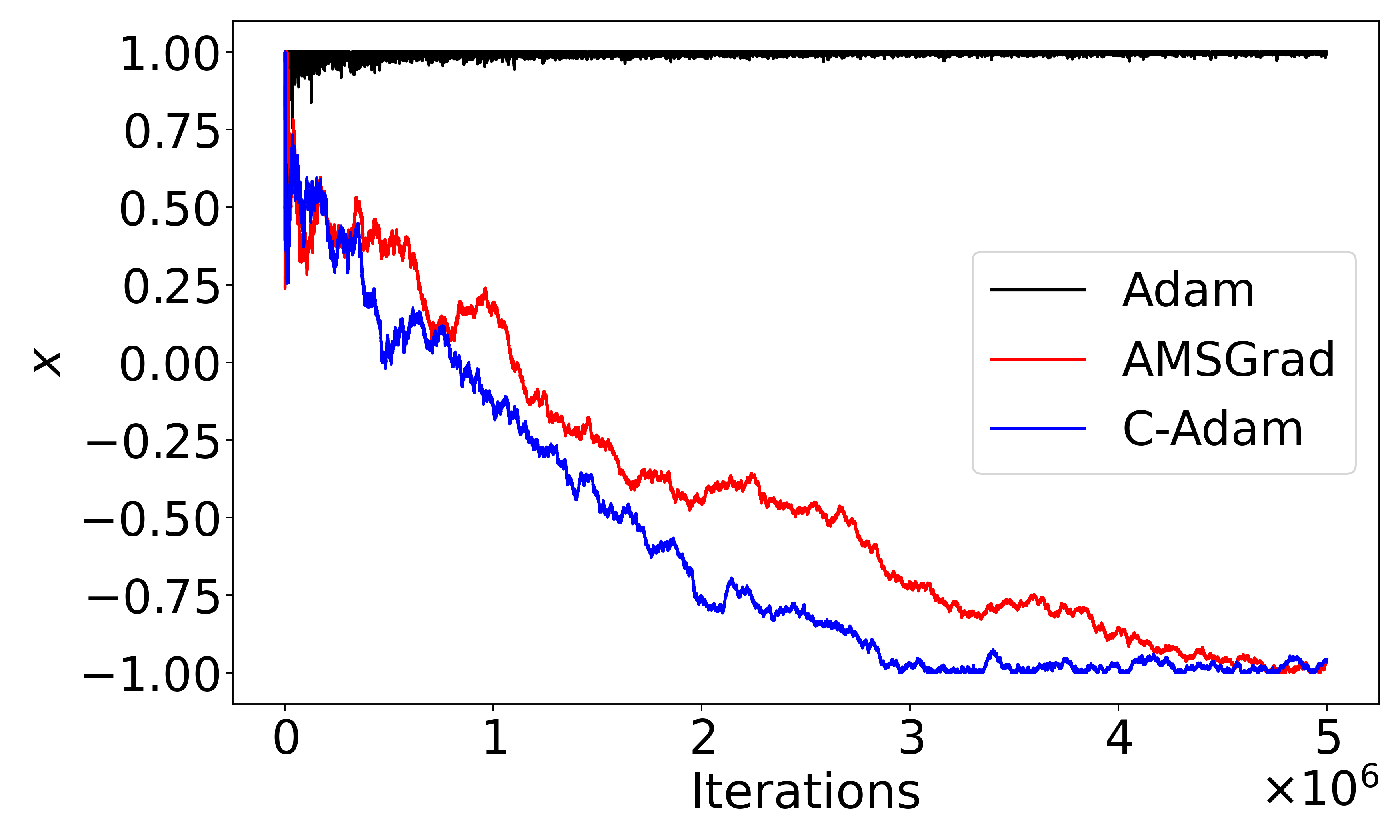}
        \end{overpic}
        \vspace{0.1cm}
        {\small (a) x versus iterations}
    \end{subfigure}
    \hspace{0.1cm}
    \begin{subfigure}[t]{0.48\textwidth}
        \centering
        \begin{overpic}[width=\linewidth]{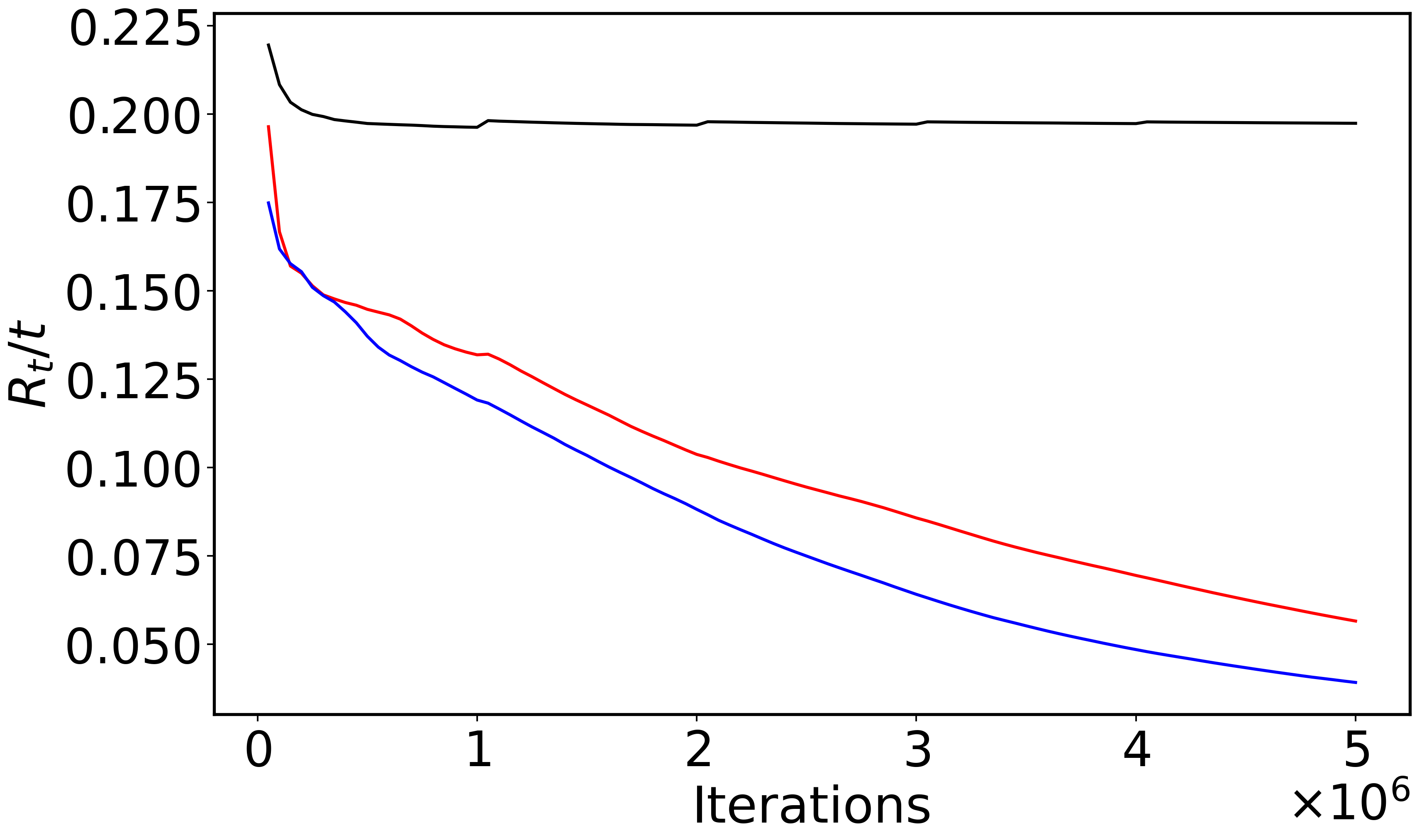}
        \end{overpic}
        \vspace{0.1cm}
        {\small (b) Regret bound versus iteration}
    \end{subfigure}

    \caption{Comparison of Adam, AMSGrad and C-Adam for the synthetic problem \ref{syn exp section}}
  \label{fig:adam_vs_amsgrad_vs_customadam}

\end{figure}

\subsection{Logistic Regression}\label{log section}
In this section, the convergence behaviour of the three optimizers in a convex optimization setting \cite{BoydVandenberghe2004} is investigated by performing multiclass logistic regression over the MNIST dataset. Specifically, the performance of the optimizers is evaluated by their ability to classify digits using 784-dimensional feature vectors obtained by flattening 28 $\times$ 28 grayscale images. The experimental setup described in Table~\ref{tldr hyperparameter} is used, with $\alpha_0 = 0.001$ and $\epsilon = 10^{-12}$. The value of hyperparameters is chosen in such a way that at least one of the optimizers shows sufficient convergence and no overfitting. A batch size of 128 is used, and training is limited to 200 iterations, as the loss does not improve significantly beyond this point.

In Fig. \ref{fig:adam_vs_amsgrad_vs_customadam_logistic}, the convergence characteristics of the three optimizers are shown. C-Adam exhibits a faster reduction in the loss and converges to a lower final value in both training (Fig. \ref{fig:adam_vs_amsgrad_vs_customadam_logistic}a) and validation (Fig. \ref{fig:adam_vs_amsgrad_vs_customadam_logistic}b) compared with Adam and AMSGrad, indicating more efficient optimization under the same training setup. The smooth evolution of the curves suggests stable convergence behaviour under the chosen training setup.

\begin{figure}[]
    \centering

    \begin{subfigure}[t]{0.48\textwidth}
        \centering
        \begin{overpic}[width=\linewidth]{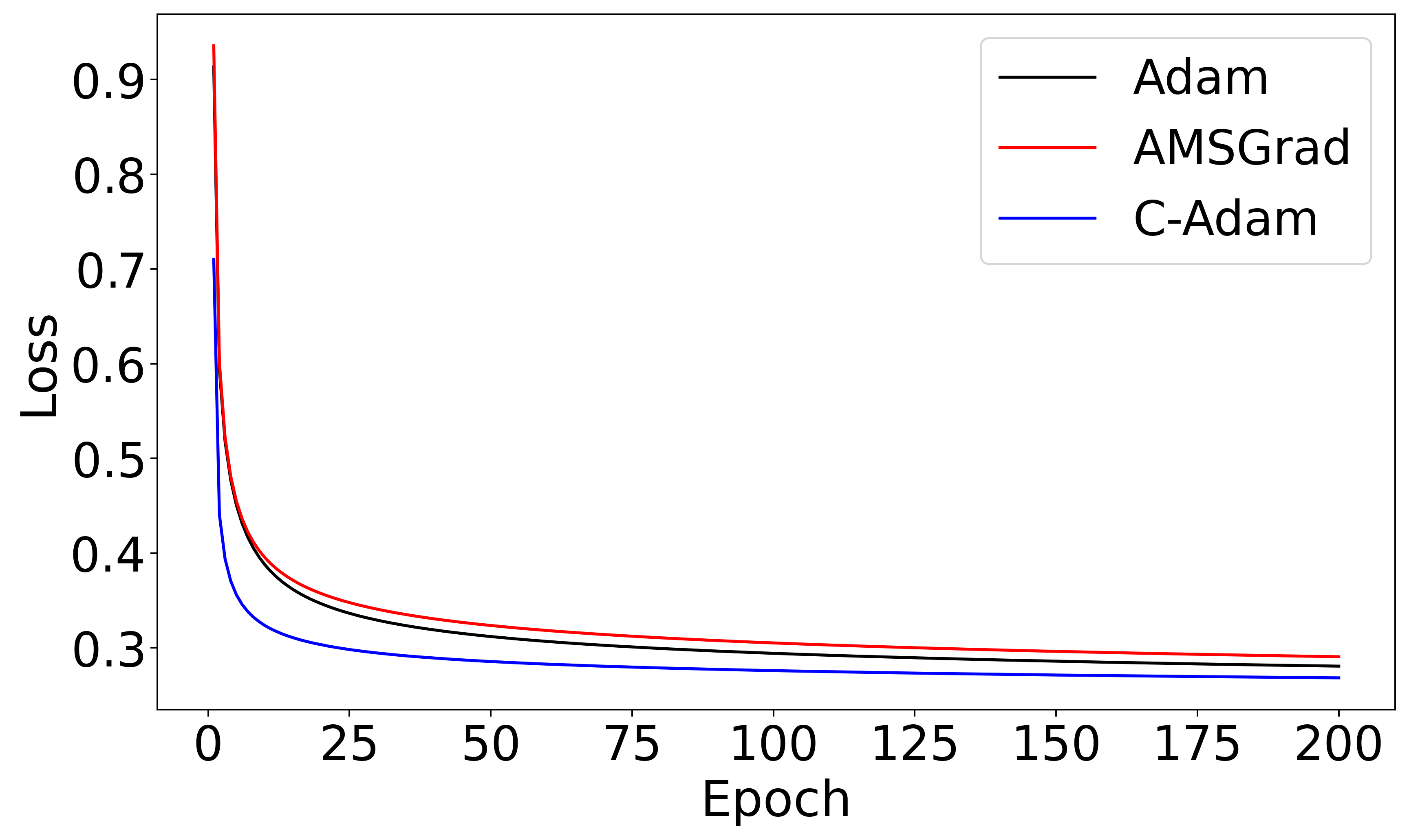}
        \end{overpic}
        \vspace{0.1cm}
        {\small (a) Training}
    \end{subfigure}
    \hspace{0.1cm}
    \begin{subfigure}[t]{0.48\textwidth}
        \centering
        \begin{overpic}[width=\linewidth]{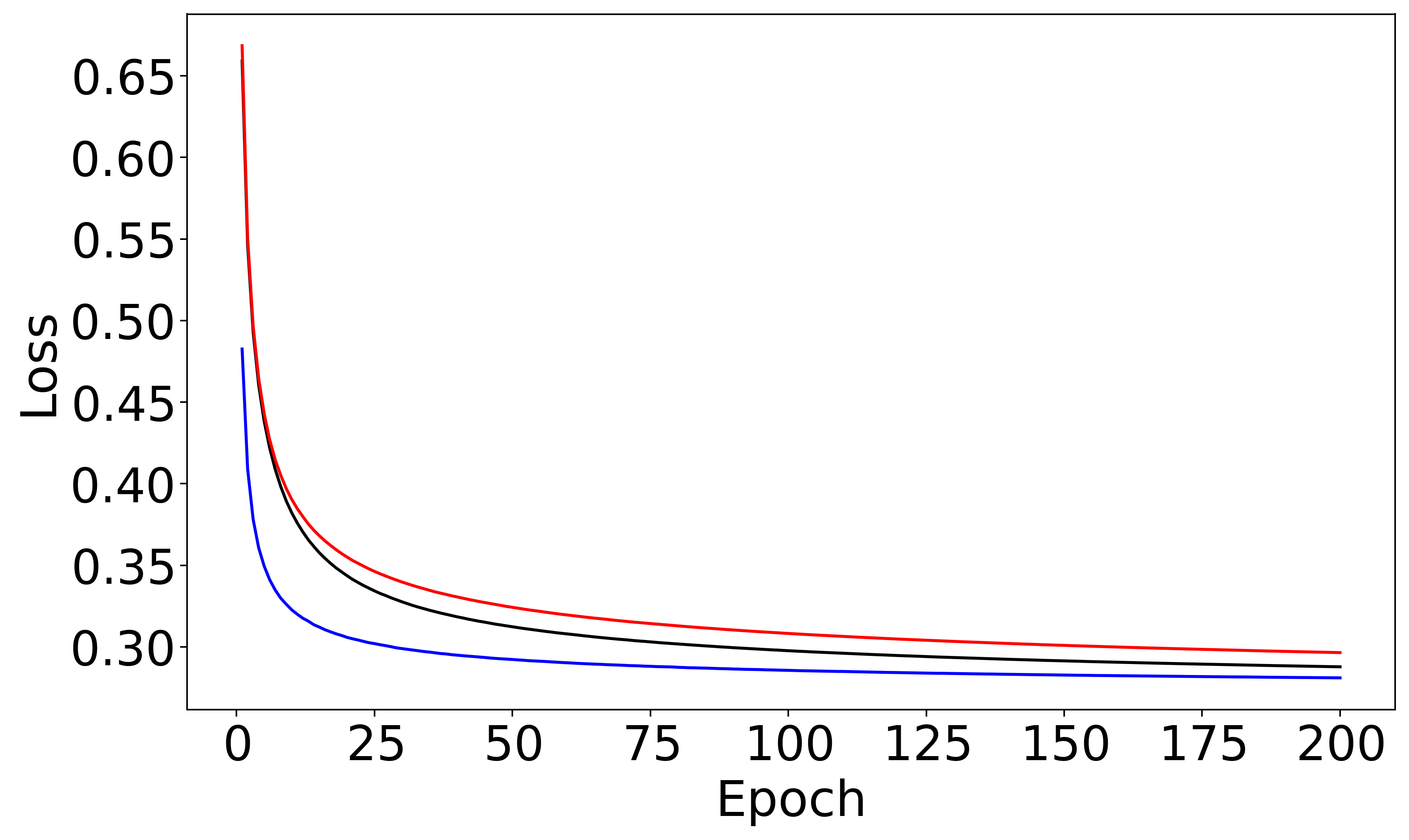}
        \end{overpic}
        \vspace{0.1cm}
        {\small (b) Validation}
    \end{subfigure}

    \caption{
    Training and validation losses of the three optimizers for multiclass classification using logistic regression \ref{log section}}
  \label{fig:adam_vs_amsgrad_vs_customadam_logistic}

\end{figure}

\subsection{Fully Connected Neural Network}\label{fully connected section}
In this section, the proposed optimizer is evaluated on a fully connected neural network with a single hidden layer. A total of 100 neurons are used in the hidden layer, with ReLU as the activation function. The remaining experimental parameters are the same as those given in Table~\ref{tldr hyperparameter}, with the initial learning rate set to $\alpha_0 = 0.001$, $\beta_{1t}$ chosen as a strictly decreasing sequence, and $\beta_2$ kept fixed. To reduce the effective learning rate over iterations, C-Adam uses a convex combination of $\max(\hat{v}_{t-1}, v_{t})$ and $\hat{v}_{t-1}$. Using a diminishing sequence of $\alpha_t$ and $\beta_{1t}$ simultaneously penalizes the learning rate multiple times, which can lead to slightly worse outcomes in some cases, and hence caution is exercised when selecting hyperparameters in such cases. 

The network is trained for multiclass classification on the MNIST dataset. The corresponding performance comparison across the three optimizers is shown in Fig.~\ref{fig:adam_vs_amsgrad_vs_customadam_singlenn}. While Adam and AMSGrad demonstrate a remarkable decrease in training loss (Fig.~\ref{fig:adam_vs_amsgrad_vs_customadam_singlenn}a), C-Adam showed faster convergence compared to the other two optimizers.

\begin{figure}[]
    \centering

    \begin{subfigure}[t]{0.48\textwidth}
        \centering
        \begin{overpic}[width=\linewidth]{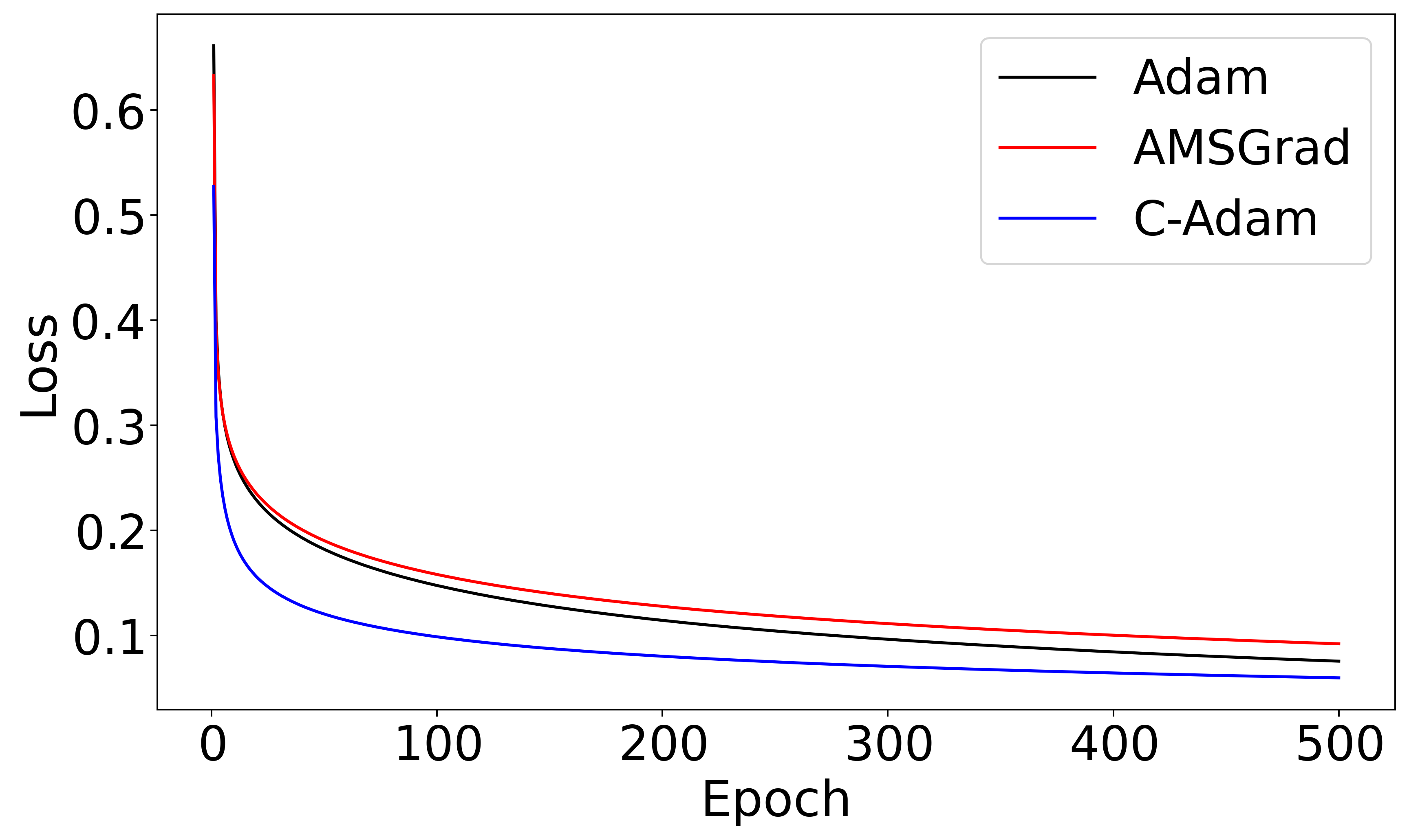}
        \end{overpic}
        \vspace{0.1cm}
        {\small (a) Training}
    \end{subfigure}
    \hspace{0.1cm}
    \begin{subfigure}[t]{0.48\textwidth}
        \centering
        \begin{overpic}[width=\linewidth]{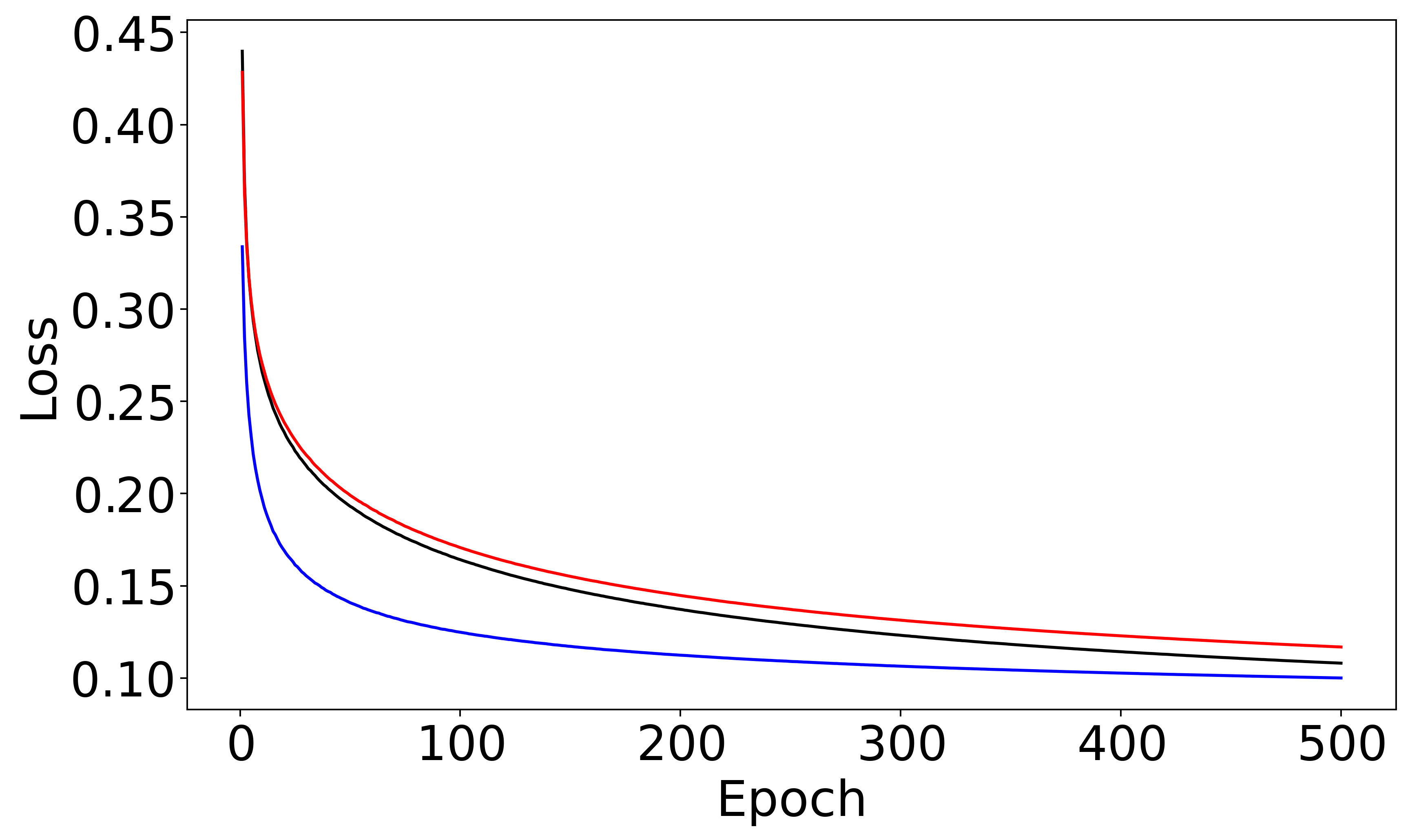}
        \end{overpic}
        \vspace{0.1cm}
        {\small (b) Validation}
    \end{subfigure}

    \caption{Training and validation losses of the three optimizers for multiclass classification using a single hidden layer fully connected neural network \ref{fully connected section}}
\label{fig:adam_vs_amsgrad_vs_customadam_singlenn}

\end{figure}

\subsection{Convolution Neural Network}\label{cnn section}
Here, the proposed optimizer is further evaluated over an image-based multiclass classification using a Convolutional Neural Network (CNN). The CNN model architecture consists of two convolutional layers with 64 filters of size $6\times 6$, each followed by RELU activation, local response normalization, and $2\times 2$ max-pooling for spatial down-sampling. The resulting feature maps of size $64\times 4\times 4$ are flattened into a 1024-dimensional vector and passed through two fully connected layers with 384 and 192 hidden units with a dropout layer in between as used in \cite{ReddiKaleKumar2019}. The experimental setup is the same as explained in Table \ref{tldr hyperparameter} with $\alpha_0 = 0.001$. The model is trained for 3500 iterations with an 80-20 train-test split using the CIFAR-10 dataset, which comprises RGB images of size $32\times 32$ with 3 colour channels for 10 classes. The corresponding performance plots are shown in Fig. \ref{fig:adam_vs_amsgrad_vs_customadam_cnn}. C-Adam exhibits better and faster convergence than other optimizers during training (Fig. \ref{fig:adam_vs_amsgrad_vs_customadam_cnn}a) and validation (Fig. \ref{fig:adam_vs_amsgrad_vs_customadam_cnn}b).

\begin{figure}[]
    \centering

    \begin{subfigure}[t]{0.48\textwidth}
        \centering
        \begin{overpic}[width=\linewidth]{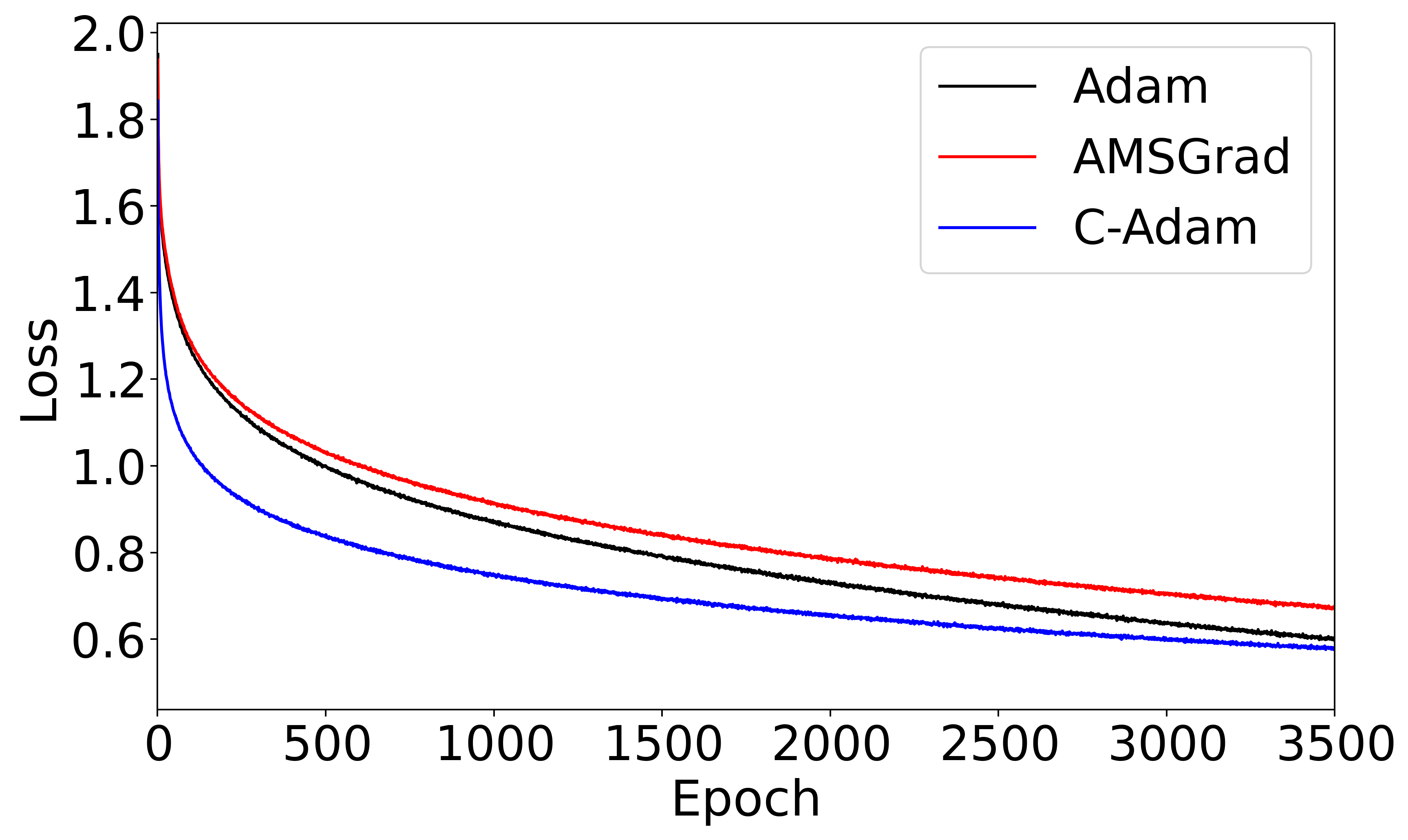}
        \end{overpic}
        \vspace{0.1cm}
        {\small (a) Training}
    \end{subfigure}
    \hspace{0.1cm}
    \begin{subfigure}[t]{0.48\textwidth}
        \centering
        \begin{overpic}[width=\linewidth]{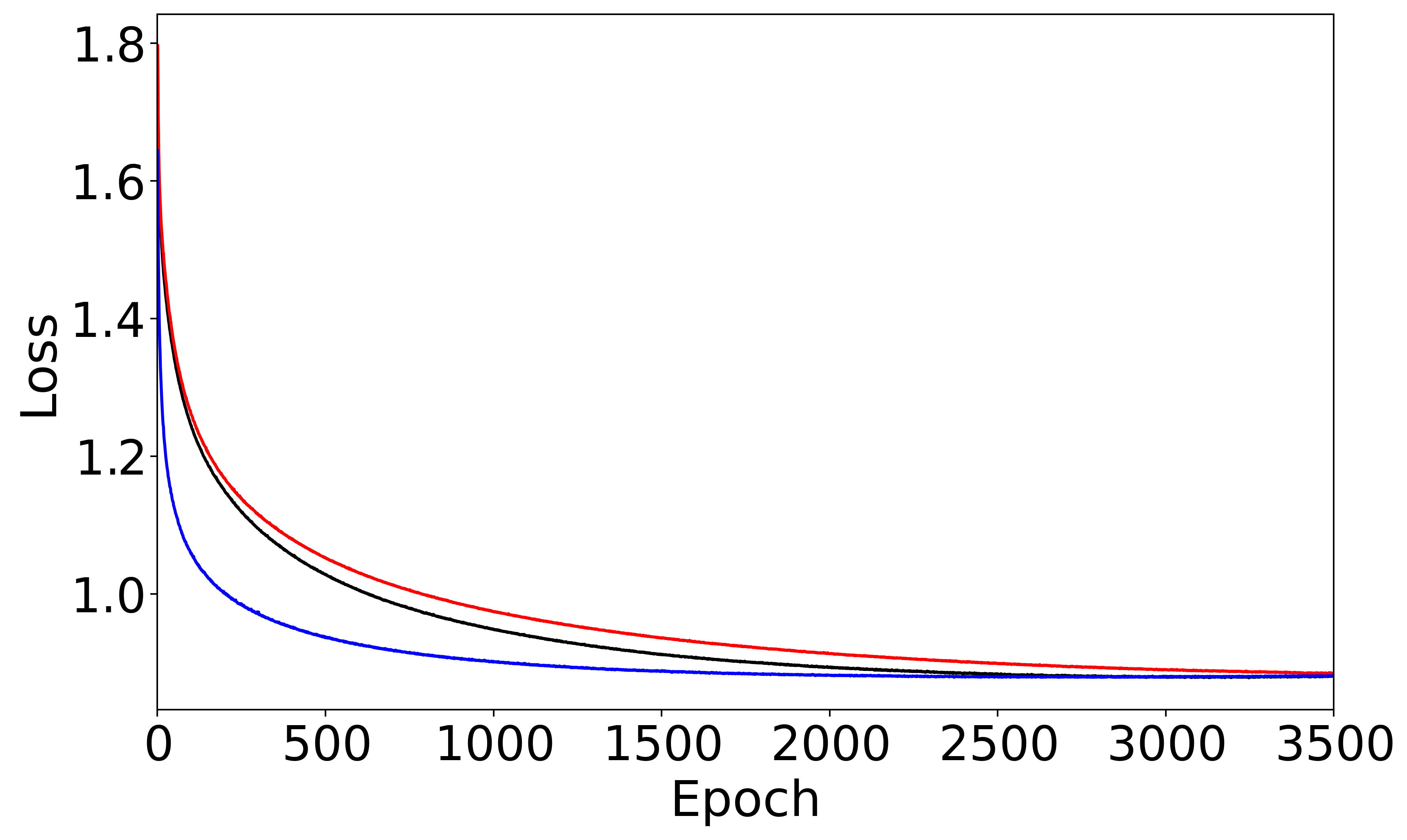}
        \end{overpic}
        \vspace{0.1cm}
        {\small (b) Validation}
    \end{subfigure}

    \caption{Training and validation losses of the three optimizers for multiclass classification over CIFAR-10 using Convolutional Neural Network \ref{cnn section}}
\label{fig:adam_vs_amsgrad_vs_customadam_cnn}
\end{figure}

\begin{figure}[]
    \centering

    \begin{subfigure}[t]{0.51\textwidth}
        \centering
        \begin{overpic}[width=\linewidth]{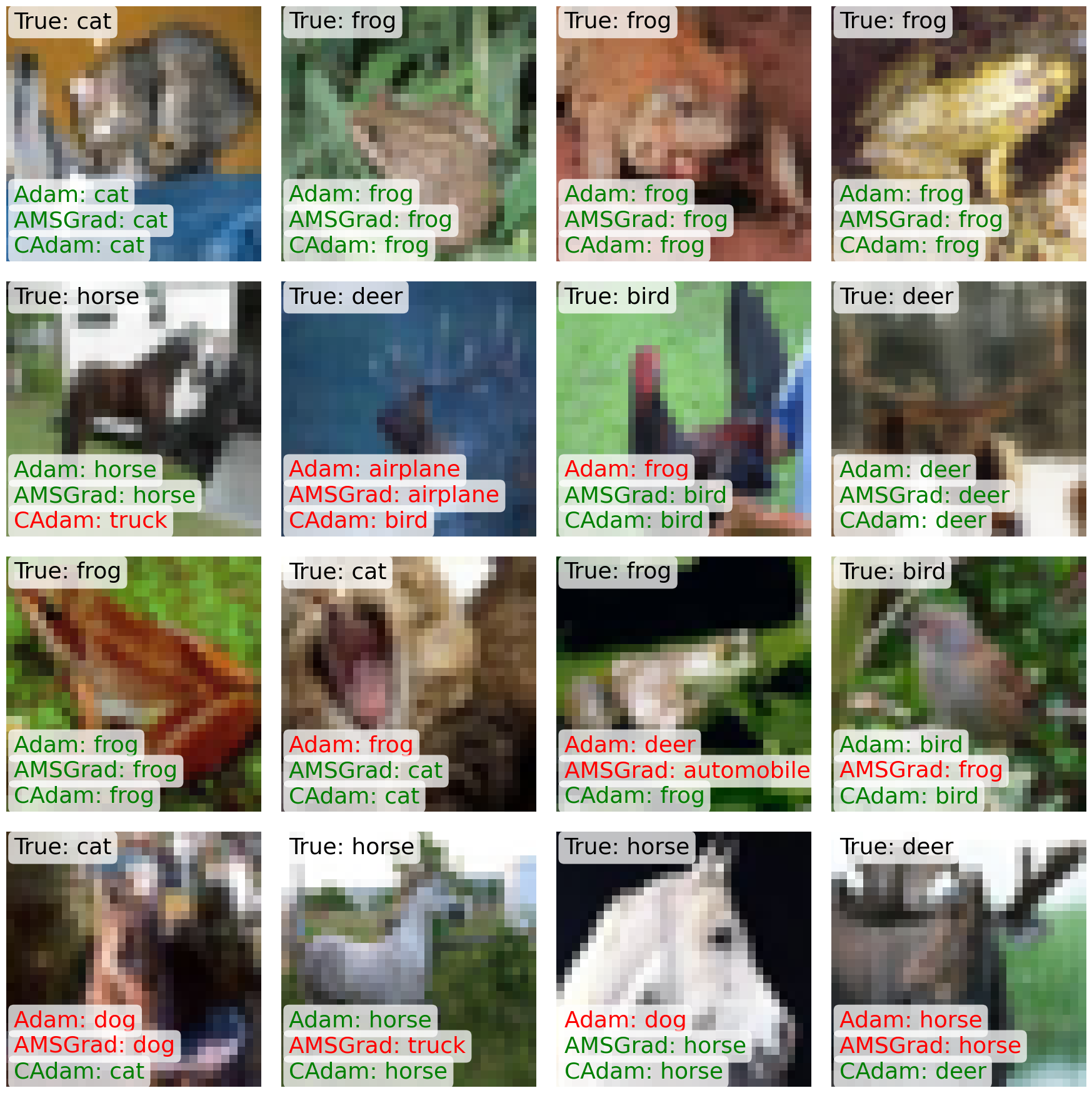}
        \end{overpic}
        \vspace{0.1cm}
        {\small (a) Prediction}
    \end{subfigure}
    \hspace{0.1cm}
    \begin{subfigure}[t]{0.45\textwidth}
        \centering
        \begin{overpic}[width=\linewidth]{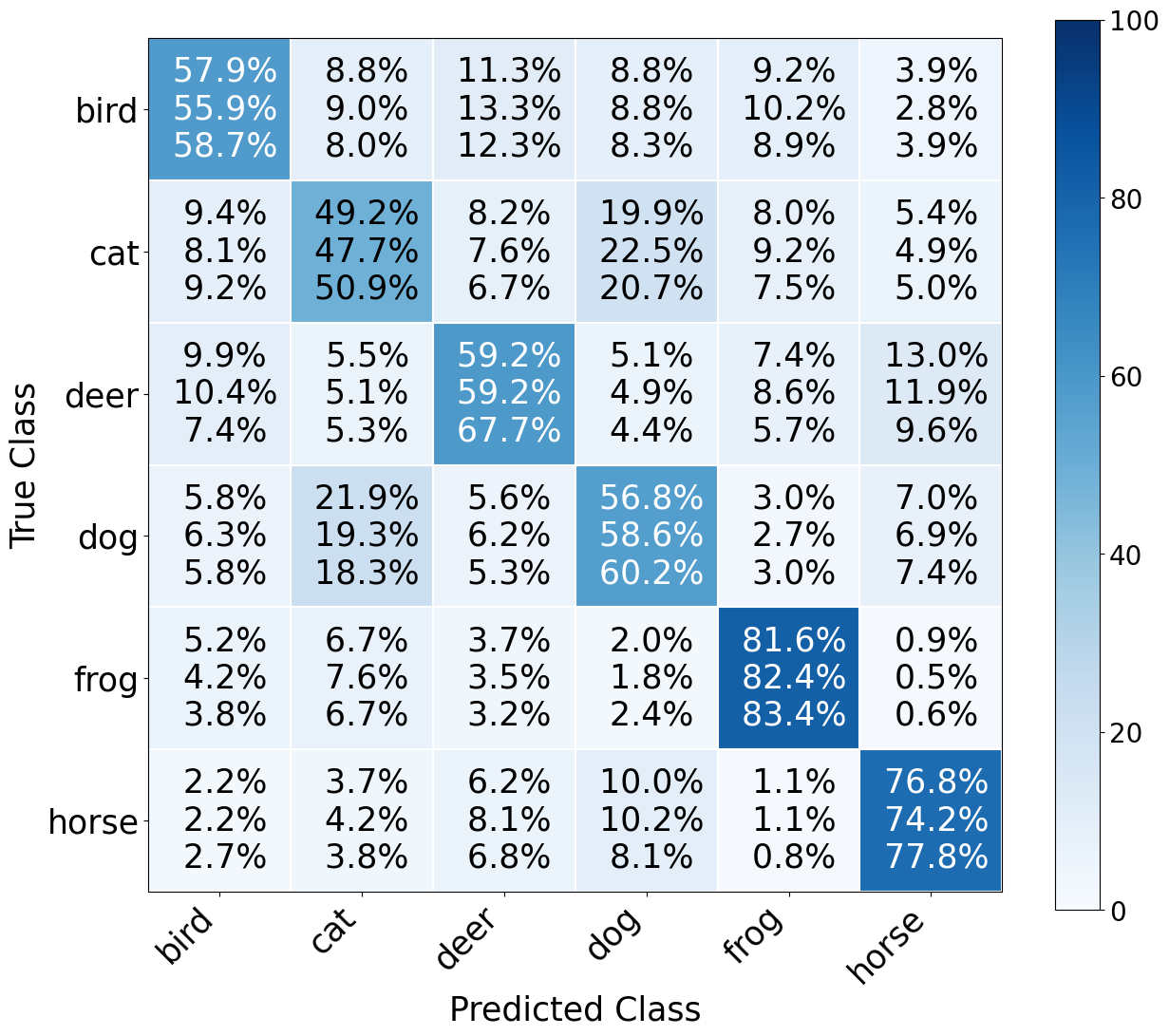}
        \end{overpic}
        \vspace{0.1cm}
        {\small (b) Confusion matrix}
    \end{subfigure}

    \caption{(a) Prediction results across randomly chosen CIFAR-10 dataset. For each image, the true label is shown in the top-left corner, and the prediction is shown in the bottom-left corner. The true predictions are marked in green and false predictions are marked in red. (b) Corresponding confusion matrix for the 5000 images in the CIFAR-10 dataset. The three percentages in each block correspond to Adam, AMSGrad and C-Adam, respectively. For better readability, only the animal classes are included in (a) and (b).}
\label{fig:predictons_images_cm}
\end{figure}

To demonstrate the advantage of the early-stage convergence of C-Adam, the best model obtained after training for a maximum of 1000 epochs across the three optimizers is used to predict on the CIFAR-10 dataset (Fig. \ref{fig:predictons_images_cm}a). The corresponding confusion matrix is also shown in Fig. \ref{fig:predictons_images_cm}(b). From Figs. \ref{fig:predictons_images_cm}(a) $\&$ (b), it can be inferred that C-Adam achieves a clear improvement over Adam and AMSGrad by reaching a better solution early, suggesting that the proposed optimizer is particularly beneficial in limited-training regimes. When training is extended to 3500 epochs, the performance and accuracy of all three optimizers are comparable, as the validation loss converges at higher epochs (Fig. \ref{fig:adam_vs_amsgrad_vs_customadam_cnn}b). Specifically, when the training is extended from 1000 epochs to 3500 epochs, the accuracy is improved by 5.08, 6.56 and 2.34 $\%$ for Adam, AMSGrad and C-Adam, respectively, with a smaller improvement on C-Adam, which already achieved better accuracy at 1000 epochs and continued to retrain relatively higher accuracy at higher epochs. It is noteworthy that the relatively low accuracy and prediction percentage in the confusion matrix on the CIFAR-10 dataset are due to the simple CNN architecture considered in the current study. In future work, the proposed optimizer will be trained and tested on more complex CNN architectures. 

\begin{rem}
   In the case where $v_{t}$, $\forall\hspace{0.1cm} t\in [T]$ is small, C-Adam yields the maximum of past values of second moment to be small as well. This increases the adaptive learning rate and results in oscillations. We can solve this issue by adding a thresholding in the algorithm of C-Adam. After finding the best $\epsilon_{0}$ using cross validation, a slightly modified version of C-Adam, which we call C-Adam\_V2 here, changes $\max(\tilde{v}_{t-1}, v_t)$ to $\max(\max(\tilde{v}_{t-1}, v_t), \epsilon_{0})$. This yields faster convergence and better results for the synthetic problem (\ref{syn exp section}) as shown in Fig.~\ref{fig:synthetic_v2_cadam}(a). 
    Since this change requires the knowledge of a suitable $\epsilon_{0}$ beforehand, it becomes difficult to apply this method for real world complex dataset problems and models where predicting a ``good" $\epsilon_0$ is not feasible. If $\epsilon_{0}$ is known for these problems, then C-Adam\_V2 will outperform Adam, AMSGrad and C-Adam in terms of convergence (Fig.~\ref{fig:synthetic_v2_cadam}a) and regret (Fig.~\ref{fig:synthetic_v2_cadam}b).
\end{rem}

\begin{figure}[]
    \centering
  \begin{subfigure}[t]{0.48\textwidth}
        \centering
        \begin{overpic}[width=\linewidth]{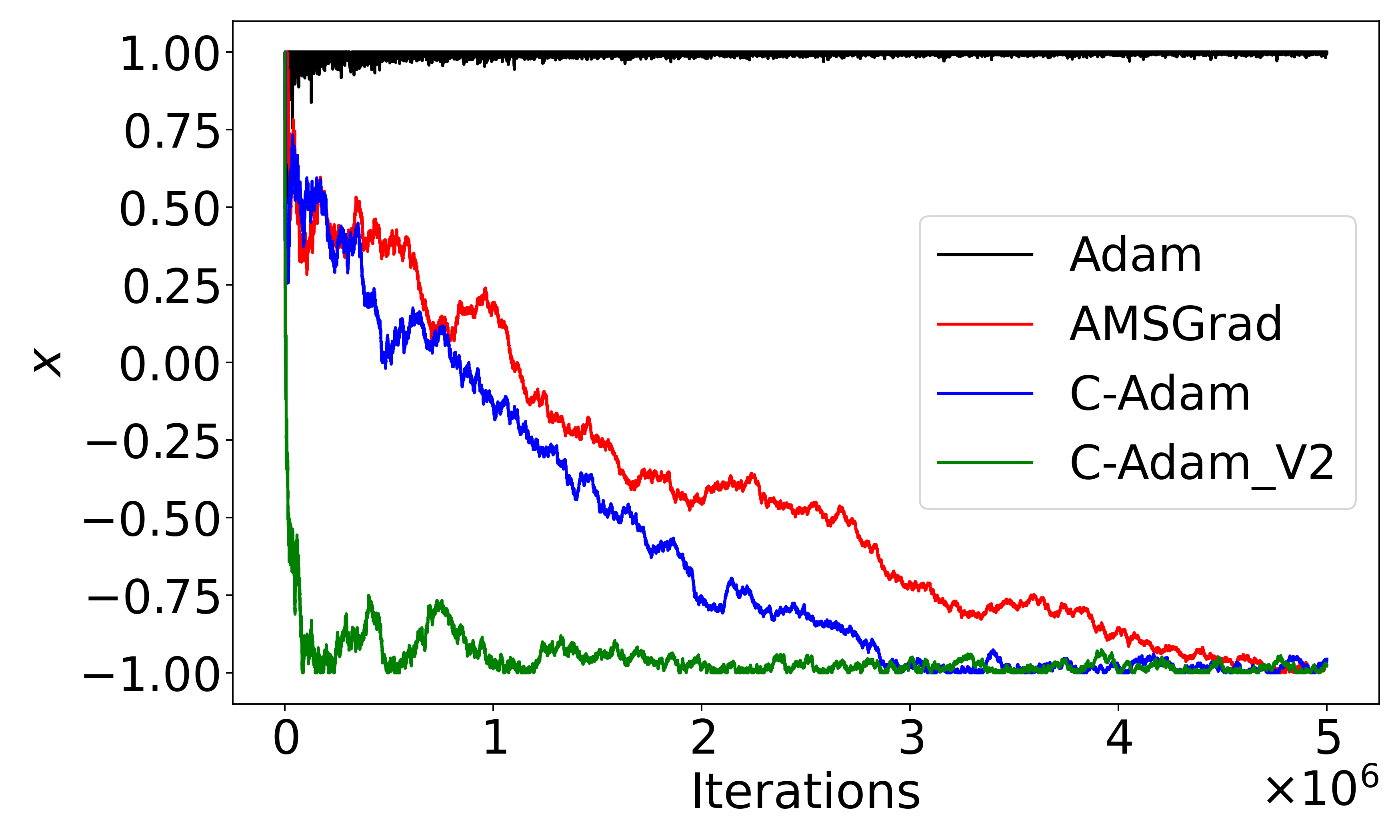}
        \end{overpic}
        \vspace{0.1cm}
        {\small (a) x versus iterations}
    \end{subfigure}
    \hspace{0.1cm}
    \begin{subfigure}[t]{0.48\textwidth}
        \centering
        \begin{overpic}[width=\linewidth]{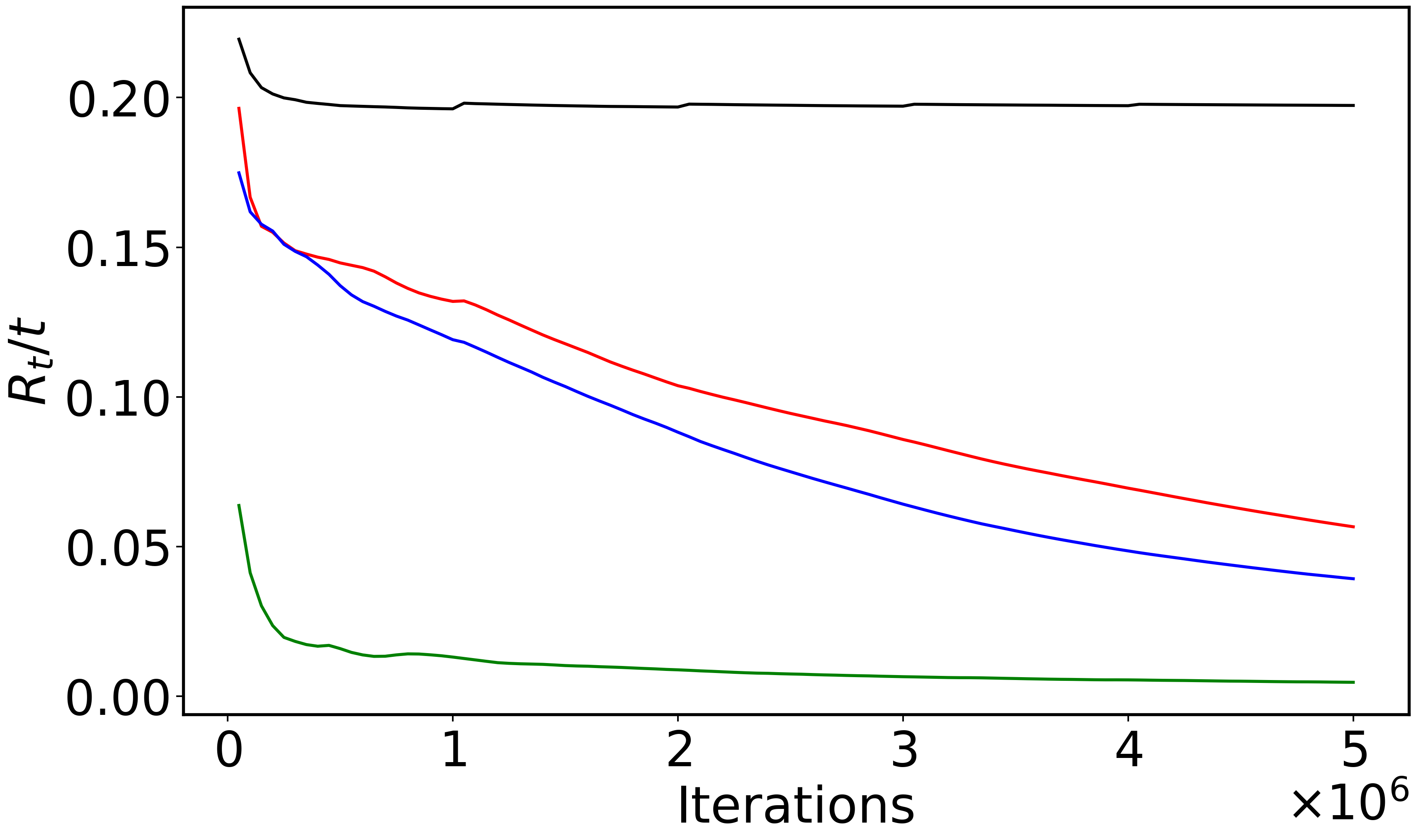}
        \end{overpic}
        \vspace{0.1cm}
        {\small (b) regret bound versus iterations}
    \end{subfigure}

    \caption{Comparison of Adam, AMSGrad, C-Adam and C-Adam\_V2 for Synthetic problem \ref{syn exp section}}
\label{fig:synthetic_v2_cadam}
\end{figure}

\section{Conclusion}\label{conclusion}
The study of adaptive optimization methods has evolved significantly with the introduction of theoretical analyses of AMSGrad. In the same domain, we have proposed an optimizer, C-Adam, which tends to perform better in terms of speed, convergence and lesser oscillations in comparison to Adam and AMSGrad. For noisy datasets where both Adam and AMSGrad fail to yield satisfactory results, we have demonstrated that C-Adam significantly outperforms them on the classification task. Theoretical analysis of the regret bound of the optimizer has been studied in detail. The proposed algorithm worked faster and with less oscillations on a synthetic example in which Adam failed to locate the optimal point and AMSGrad inspite of reaching the optimal point, did so in longer time and with high oscillations. Further, the proposed algorithm was validated on the multiclass classification on the MNIST dataset for logistic regression. C-Adam also performs experimentally better than Adam and AMSGrad for image classification on the CIFAR-10 dataset. Hence, the line-of-sight approach used in C-Adam seems to have worked well across a variety of examples giving faster convergence.

\bibliographystyle{unsrt}
\bibliography{ref}
\end{document}